\documentclass{article}

\usepackage[preprint]{arxiv}

\usepackage[utf8]{inputenc}
\usepackage[T1]{fontenc}
\usepackage{hyperref}
\usepackage{url}
\usepackage{booktabs}
\usepackage{graphicx}
\usepackage{amsfonts}
\usepackage{amssymb}
\usepackage{mathtools}
\usepackage{amsthm}
\usepackage{nicefrac}
\usepackage{microtype}
\usepackage[table]{xcolor}
\usepackage{algorithm}
\usepackage{algpseudocode}

\title{Correcting Stochastic Update Bias in Preconditioned Language Model Optimizers}

\author{%
  \textbf{Nikhil Nayak, Julia White, Urchade Zaratiana, Kelton Zhang,} \\
  \textbf{Henrijs Princis, Dhruv Atreja, Henry Fawcett, Matthew Thomas,} \\
  \textbf{George Hurn-Maloney, Ash Lewis} \\~\\
  \textbf{Fastino Labs} \\~\\
  Correspondence: \texttt{nayak.nikhil2608@gmail.com} \\
  Code: \url{https://github.com/fastino-ai/preconditioner-bias-correction}
}

\newcommand{\E}{\mathbb{E}}
\newcommand{\D}{\mathcal{D}}
\newcommand{\B}{\mathcal{B}}

\newcommand{\Var}{\mathrm{Var}}
\definecolor{headergray}{gray}{0.90}
\definecolor{sectiongray}{gray}{0.95}
\newcommand{\best}[1]{\textbf{#1}}

\mathtoolsset{showonlyrefs}

\newtheorem{assumption}{Assumption}
\newtheorem{theorem}{Theorem}
\newtheorem{corollary}{Corollary}

\begin{document}

\maketitle

\begin{abstract}
Preconditioned optimizers are central to language model training, but their stochastic update rules are usually treated as direct approximations to population preconditioned descent. We show that this view misses two finite-sample biases. First, the gradient and preconditioner are typically estimated from the same minibatch, introducing gradient--preconditioner coupling bias. Second, even when the preconditioner estimate is unbiased, its inverse or inverse-root is generally biased because inversion is nonlinear. We propose a single-batch bias-correction framework that addresses both effects: cross-fitted preconditioning estimates the numerator and preconditioner from independent microbatch groups, while variance-corrected inversion uses microbatch variability to subtract the leading delta-method bias term. The framework applies to diagonal moment, diagonal curvature, and matrix preconditioning methods, instantiated in AdamW, Sophia, and Shampoo. Bias correction reduces held-out pretraining loss on Qwen2.5-0.5B by $0.15$, $0.07$, and $0.11$ nats, respectively; the effects on mixed-quality pretraining and downstream instruction tuning are consistently neutral-to-positive. Together, these results establish bias correction as a practical mechanism for reducing finite-sample update bias and improving the performance of preconditioned optimizers.
\end{abstract}

\section{Introduction}
\label{sec:introduction}

Large language models are usually trained with first-order stochastic optimizers such as SGD, Adam, AdamW, and their matrix- or curvature-aware variants. These methods work well in practice, but their update rules are usually treated as deterministic transformations of stochastic quantities. In particular, adaptive optimizers estimate both a descent direction and a geometry term from minibatches. The geometry term may be a scalar learning-rate estimate, a diagonal second-moment estimate, a curvature proxy, or a structured matrix preconditioner.

This paper focuses on a statistical question that is usually implicit in optimizer design. For the population objective
\begin{equation}
    f(\theta) = \E_{z \sim \D}[\ell(\theta; z)]
\end{equation}
and a population preconditioner $P(\theta)$, the ideal preconditioned update at parameter value $\theta$ is
\begin{equation}
    u^\star(\theta) = P(\theta)^{-1} \nabla f(\theta),
    \label{eq:ideal-update}
\end{equation}
whereas in practice we only observe minibatch estimates $\hat g$ and $\hat P$ and apply the standard adaptive update
\begin{equation}
    \hat u = \hat P^{-1} \hat g.
    \label{eq:standard-update}
\end{equation}
Even when $\hat g$ is an unbiased gradient estimator and $\hat P$ is an unbiased preconditioner estimator, the product in Equation~\eqref{eq:standard-update} is not generally an unbiased estimator of Equation~\eqref{eq:ideal-update}.

There are two distinct sources of mismatch. First, if $\hat P$ and $\hat g$ are estimated from the same random minibatch, the two terms are statistically coupled. Second, even if $\hat P$ is unbiased for $P$, the inverse $\hat P^{-1}$ is generally biased for $P^{-1}$ because inversion is nonlinear. These effects are not implementation mistakes. They arise from using finite stochastic estimates inside nonlinear adaptive update rules.

The goal of this paper is to make these two biases explicit and propose a practical correction framework. The proposed method is single-step and single-batch: it does not require revisiting the same parameter value, which is unrealistic in neural network training. At a fixed $\theta_t$, the current training batch is split into microbatch groups. One group estimates the gradient, another estimates the preconditioner, and microbatch variation is used to correct the inverse-preconditioner estimate before the update is applied.

We evaluate the framework on three optimizer families. AdamW represents diagonal moment-based adaptive optimization. Sophia represents diagonal curvature-based or lightweight second-order optimization. Shampoo represents matrix-preconditioned optimization. These three cases test whether the proposed bias correction is useful across increasingly structured preconditioners: diagonal moments, diagonal curvature, and matrix geometry.

\paragraph{Contributions.}
This paper makes three contributions.
\begin{enumerate}
    \item We give a statistical decomposition of the stochastic preconditioned update into two distinct finite-sample biases: gradient--preconditioner coupling, and the nonlinearity of inversion.
    \item We propose a single-batch correction framework that combines cross-fitted preconditioning with a delta-method inverse correction estimated from microbatch variability, and we analyze its bias-reduction and convergence properties.
    \item We instantiate the framework across diagonal moment, diagonal curvature, and matrix preconditioning (AdamW, Sophia, Shampoo) and evaluate it on pretraining and instruction tuning; the strongest empirical results are $0.1489$-, $0.0701$-, and $0.1103$-nat reductions in held-out pretraining cross-entropy for AdamW, Sophia, and Shampoo.
\end{enumerate}

\section{Related work}
\label{sec:related-work}

\textbf{Momentum, acceleration, and convergence guarantees.}
The study of faster first-order optimization predates modern adaptive methods. Robbins and Monro introduced stochastic approximation as a framework for iterative optimization from noisy observations \citep{robbins1951stochastic}. Polyak's heavy-ball method showed how momentum can accelerate deterministic iterative methods \citep{polyak1964some}, while Nesterov's accelerated method achieved the optimal $O(1/k^2)$ rate for smooth convex optimization with first-order information \citep{nesterov1983method}. Polyak--Ruppert averaging later showed that averaging stochastic approximation iterates can recover optimal asymptotic behavior under classical conditions \citep{polyak1992acceleration}. For nonconvex stochastic optimization, guarantees typically shift from convergence to a global minimum to convergence toward approximate stationarity; for example, \citet{ghadimi2013stochastic} give complexity guarantees for stochastic first-order methods in smooth nonconvex problems. In deep learning, momentum remains central: \citet{sutskever2013importance} showed that carefully scheduled momentum and initialization allow SGD with momentum to train deep and recurrent networks effectively, and Lookahead wraps an inner optimizer with slow/fast weights to improve stability and reduce variance \citep{zhang2019lookahead}. These works improve convergence rates, stability, or asymptotic behavior of the optimization trajectory. Our work addresses a different but compatible question: whether the stochastic preconditioned update used by such optimizers is itself a biased estimator of the corresponding population preconditioned direction.

\textbf{Adaptive diagonal optimizers.}
AdaGrad introduced coordinate-wise adaptive scaling from accumulated squared gradients, with both diagonal and full-matrix viewpoints \citep{duchi2011adaptive}. Adam made this form of adaptivity the default in deep learning by combining exponential moving averages of first and second moments with initialization bias corrections for those moving averages \citep{kingma2015adam}. AdamW decoupled weight decay from the adaptive gradient update, which is now standard in language model training \citep{loshchilov2019decoupled}. Subsequent work identified convergence pathologies in Adam-like exponential moving average schemes and proposed AMSGrad-style long-memory fixes \citep{reddi2018convergence}. Other widely used variants target optimizer memory or large-batch stability: Adafactor factorizes second-moment accumulators to reduce memory from matrix size to row/column size \citep{shazeer2018adafactor}, while LAMB adds layerwise trust ratios for large-batch BERT training \citep{you2020large}. These methods design the form of the preconditioner, its memory footprint, or its large-batch scaling. Our work is orthogonal: given a stochastic preconditioner estimate, we study the bias introduced by multiplying it with a stochastic gradient and by applying a nonlinear inverse. This is also distinct from Adam's standard bias correction, which corrects the initialization bias of exponential moving averages, not the finite-sample bias of the inverse preconditioner.

\textbf{Curvature and matrix preconditioning.}
Natural gradient methods use the Fisher geometry as the local metric for optimization \citep{amari1998natural}. K-FAC makes this idea practical for neural networks by approximating layerwise Fisher blocks with Kronecker factors \citep{martens2015optimizing}. Shampoo instead maintains structured second-moment matrices for tensor dimensions and applies inverse-root preconditioners \citep{gupta2018shampoo}; scalable implementations add numerical and systems refinements that make Shampoo practical for deep networks and language models \citep{anil2021scalable}. SOAP further combines Shampoo's eigenspace with Adam-like second-moment tracking to reduce the degradation from infrequent eigendecompositions \citep{vyas2024soap}. In the diagonal-curvature direction, Sophia estimates a diagonal Hessian or Gauss--Newton--Bartlett-style curvature proxy and clips the resulting curvature-scaled update for language model pretraining \citep{liu2024sophia}. These methods estimate richer geometry than AdamW. Our contribution is not a new curvature approximation; it is a statistical correction layer for the stochastic inverse or inverse-root already used by such optimizers.

\textbf{Variance reduction, stability, and objective modification.}
Classical variance-reduced stochastic methods such as SVRG and SAGA construct lower-variance gradient estimators by reusing reference or stored gradients \citep{johnson2013accelerating,defazio2014saga}. These methods reduce gradient noise but do not address bias from inverting a noisy preconditioner. Sharpness-Aware Minimization modifies the training objective to favor parameters whose neighborhoods have uniformly low loss \citep{foret2021sharpness}. This is another complementary axis: SAM changes the objective being optimized, whereas our method changes the statistical estimator of a preconditioned update for a fixed objective. Clipping and trust-ratio mechanisms in optimizers such as Adafactor, LAMB, and Sophia can stabilize large or poorly scaled updates, but they do not explicitly remove the leading inverse-estimator bias; in our experiments, such stabilization can either help preserve useful same-batch normalization or mask the effect of denominator correction.

\textbf{Statistical bias correction and sample splitting.}
The jackknife was developed as a general bias-reduction technique for estimators \citep{quenouille1956bias,efron1982jackknife}, and the delta method gives the standard second-order expansion used to approximate the bias of smooth nonlinear transformations \citep{vandervaart1998asymptotic}. Cross-fitting and sample splitting are also central in double/debiased machine learning, where they reduce overfitting and nuisance-estimation bias in semiparametric inference \citep{chernozhukov2018double}. We use these statistical tools in a different setting: a single optimizer step at a fixed parameter vector. The numerator and denominator of the update are estimated from independent microbatch groups to remove gradient--preconditioner coupling, and microbatch variability is used to subtract the leading bias introduced by inverse or inverse-root maps. To our knowledge, prior optimizer work has not turned these classical bias-correction ideas into a practical correction for AdamW-, Sophia-, and Shampoo-style language model optimizers.

\section{Background and problem setup}
\label{sec:background}

\subsection{Stochastic first-order optimization}

For the objective $f(\theta) = \E_{z \sim \D}[\ell(\theta; z)]$, plain SGD updates parameters as $\theta_{t+1} = \theta_t - \eta \hat g_\B$ using the unbiased minibatch gradient
\begin{equation}
    \hat g_\B = \frac{1}{|\B|}\sum_{z_i \in \B} \nabla_\theta \ell(\theta; z_i),
    \qquad
    \E[\hat g_\B] = \nabla f(\theta).
\end{equation}
Here the preconditioner is a deterministic identity, so neither bias studied in this paper arises: SGD has gradient variance, but no gradient--preconditioner coupling bias and no inverse-preconditioner bias.

Both biases appear once the update contains a stochastic, data-dependent multiplier---as in adaptive SGD variants, stochastic line-search methods, gradient clipping, normalized gradients, Adam-style methods, or matrix-preconditioned optimizers.
All of these can be written abstractly as
\begin{equation}
    \theta_{t+1} = \theta_t - \eta \hat P_t^{-1} \hat g_t,
\end{equation}
where $\hat P_t$ may be diagonal, block-diagonal, low-rank, or matrix-valued. 
The statistical question, which standard convergence analyses do not address, is whether $\hat P_t^{-1}\hat g_t$ is a clean estimator of the ideal population update.

\subsection{Adaptive and preconditioned optimizers}

Adam-style optimizers use coordinate-wise second moments to scale gradients. Ignoring momentum and bias correction for notation, a simplified Adam-style update and its associated diagonal preconditioner are
\begin{equation}
    \theta_{t+1} = \theta_t - \eta \frac{\hat g_t}{\sqrt{\hat v_t} + \epsilon},
\end{equation}
\begin{equation}
    \hat P_t = \mathrm{diag}(\sqrt{\hat v_t}+\epsilon).
\end{equation}
Here $\hat v_t$ estimates a second moment of gradients, and AdamW is the main representative of this family in our experiments.

Sophia-style optimizers also use diagonal preconditioning, but the preconditioner is based on a stochastic curvature or Hessian proxy rather than only the second moment of the gradient. In simplified notation, a Sophia-style update can be written as
\begin{equation}
    \theta_{t+1}
    =
    \theta_t
    -
    \eta \, \mathrm{clip}\left(\frac{\hat m_t}{\hat h_t+\lambda}, c_{\mathrm{clip}}\right),
\end{equation}
where $\hat h_t$ is a diagonal curvature estimate, $\lambda$ is damping, and $c_{\mathrm{clip}}$ is the ratio-clipping threshold. This gives a second setting where the inverse of a noisy diagonal quantity directly affects the update.

Matrix-preconditioned methods instead use a structured matrix estimate, such as a blockwise covariance, Fisher approximation, or inverse-root preconditioner. Shampoo is the representative matrix-preconditioned optimizer in our experiments. Unlike AdamW and Sophia, Shampoo uses matrix-valued preconditioners, making it a direct test of whether the proposed correction extends beyond coordinate-wise scaling.

\subsection{Problem setup}
\label{sec:problem-setup}

Fix a parameter value $\theta$ and let $z \sim \D$ be a training example. Let $g(z;\theta)$ denote the per-example gradient and let $P(z;\theta)$ denote the corresponding per-example or microbatch preconditioning statistic. We write the population gradient, population preconditioner, and target preconditioned update as
\begin{equation}
    \begin{aligned}
    g(z;\theta) &= \nabla_\theta \ell(\theta;z),
    &
    g(\theta) &= \E[g(z;\theta)] = \nabla f(\theta), \\
    P(\theta) &= \E[P(z;\theta)],
    &
    u^\star(\theta) &= P(\theta)^{-1} g(\theta).
    \end{aligned}
\end{equation}

Given a minibatch $A$, the corresponding empirical quantities and same-batch preconditioned update are
\begin{equation}
    \begin{aligned}
    \hat g_A &= \frac{1}{|A|}\sum_{z_i \in A} g(z_i;\theta),
    &
    \hat P_A &= \frac{1}{|A|}\sum_{z_i \in A} P(z_i;\theta), \\
    \hat u_A &= \hat P_A^{-1}\hat g_A.
    \end{aligned}
\end{equation}

Suppressing the shared $\theta$ argument, the central observation is that, in general,
\begin{equation}
    \E[\hat P_A^{-1}\hat g_A]
    \neq
    P^{-1}g.
    \label{eq:not-unbiased}
\end{equation}
This mismatch has two sources, described next.

\section{Two finite-sample biases}
\label{sec:two-biases}

Equation~\eqref{eq:not-unbiased} states that the same-batch preconditioned update is not an unbiased estimator of the target update. The right-hand side has two distinct sources of mismatch, made precise below: gradient--preconditioner coupling, and the nonlinearity of inversion.

\subsection{Coupling bias}
\label{sec:coupling-bias}

The first issue appears because the same random minibatch is used to estimate both the gradient and the preconditioner. Since both $\hat P_A$ and $\hat g_A$ are functions of the same random examples, they are statistically dependent:
\begin{equation}
    \E[\hat P_A^{-1}\hat g_A]
    \neq
    \E[\hat P_A^{-1}]\,\E[\hat g_A]
\end{equation}
in general. We call the resulting cross-moment term
\begin{equation}
    \mathrm{Bias}_{\mathrm{coupling}}
    =
    \E[\hat P_A^{-1}\hat g_A]
    -
    \E[\hat P_A^{-1}]\,\E[\hat g_A]
    \label{eq:coupling-bias}
\end{equation}
the gradient--preconditioner coupling bias.

This term is not present in plain SGD because $P=I$ is deterministic. It appears when the update uses a stochastic preconditioner, stochastic normalization term, stochastic clipping rule, or stochastic line-search estimate. It can also appear in AdamW, Sophia, and Shampoo because the same minibatch can influence both the descent direction and the stochastic geometry used to scale or precondition that direction.

The coupling term is not necessarily always harmful. In some settings, adapting the preconditioner to the current minibatch geometry may help. However, if the target is the population preconditioned update $P^{-1}g$, then same-batch coupling introduces an additional term that is not part of the target vector field.

\subsection{Inverse-preconditioner bias}
\label{sec:inverse-bias}

The second issue remains even if the gradient and preconditioner are estimated from independent samples. An unbiased preconditioner estimate, $\E[\hat P] = P$, does not imply unbiasedness after inversion because matrix inversion is nonlinear:
\begin{equation}
    \E[\hat P^{-1}]
    \neq
    \left(\E[\hat P]\right)^{-1}.
    \label{eq:inverse-bias}
\end{equation}
The scalar case makes the issue concrete: for the damped inverse $f(x)=(x+\lambda)^{-1}$ with $f''(x)=2(x+\lambda)^{-3}>0$, Jensen's inequality gives $\E[1/(\hat p+\lambda)]\geq 1/(p+\lambda)$, with strict inequality whenever $\hat p$ has nonzero variance. A simple two-point example, $\hat p\in\{0.5,1.5\}$ with $\E[\hat p]=1$, yields $\E[1/\hat p]=4/3\neq 1$. Appendix~\ref{app:delta-method-inverse} gives the full delta-method derivation of the leading bias and the corresponding correction.

For a preconditioned update, this creates the inverse-preconditioner bias
\begin{equation}
    \mathrm{Bias}_{\mathrm{inv}}
    =
    \left(\E[\hat P^{-1}] - P^{-1}\right)g.
    \label{eq:inverse-preconditioner-bias}
\end{equation}
This bias is reduced by larger batches, exponential moving averages, damping, and conservative learning rates. It is not necessarily the dominant error in all regimes. However, it is a real finite-sample effect and becomes more relevant when preconditioner estimates are noisy, aggressive, or matrix-valued. This is why the issue is especially natural to test on AdamW, Sophia, and Shampoo.

\section{Proposed method}
\label{sec:method}

We propose a two-part correction framework. The first component removes gradient--preconditioner coupling by estimating the gradient and preconditioner from independent microbatch groups. The second component reduces inverse-preconditioner bias using single-batch microbatch variability.

\subsection{Cross-fitted preconditioning}

At step $t$, parameters are fixed at $\theta_t$. We sample two independent microbatch groups $A$ and $B$ from the same training distribution and use them for the numerator and preconditioner estimates,
\begin{equation}
    \hat g_A
    =
    \frac{1}{|A|}\sum_{z_i\in A} g(z_i;\theta_t),
    \qquad
    \hat P_B
    =
    \frac{1}{|B|}\sum_{z_i\in B} P(z_i;\theta_t).
\end{equation}
The resulting cross-fitted update and, by independence, its factorized expectation are
\begin{equation}
    \hat u_{\mathrm{cf}}
    =
    \hat P_B^{-1}\hat g_A.
    \label{eq:crossfit-update}
\end{equation}
\begin{equation}
    \E[\hat P_B^{-1}\hat g_A]
    =
    \E[\hat P_B^{-1}]\,\E[\hat g_A].
\end{equation}
Therefore, cross-fitting removes the coupling term in Equation~\eqref{eq:coupling-bias}. It does not, by itself, remove inverse-preconditioner bias. The remaining mismatch is
\begin{equation}
    \E[\hat u_{\mathrm{cf}}] - P^{-1}g
    =
    \left(\E[\hat P_B^{-1}] - P^{-1}\right)g.
\end{equation}

In practice, $A$ and $B$ can be created by splitting the current training batch, by using separate gradient-accumulation microbatch groups, or by drawing two independent minibatches from the data stream. The theory assumes independence. Occasional overlap is acceptable if the sampling procedures are independent and the dataset is large.

\subsection{Single-batch inverse correction}

We next reduce the finite-sample bias in $\hat P_B^{-1}$. The correction must be computed at the current parameter value $\theta_t$, because training does not revisit the same point. We therefore use microbatch variability inside the current batch.

Let $B$ be split into $m$ microbatches $B_1,\ldots,B_m$, with microbatch preconditioners and averaged preconditioner
\begin{equation}
    \hat P_j = \hat P_{B_j}.
\end{equation}
\begin{equation}
    \bar P = \frac{1}{m}\sum_{j=1}^m \hat P_j.
\end{equation}
A standard method would use $(\bar P+\lambda I)^{-1}$, where $\lambda>0$ is a damping term. We instead use the variation among $\hat P_1,\ldots,\hat P_m$ to estimate and subtract the leading inverse bias.

\subsubsection{Diagonal correction for AdamW and Sophia}

For diagonal preconditioners, the correction is elementwise. This covers AdamW and Sophia. Let $\bar p_k$ denote the $k$th coordinate of the averaged preconditioner and let $\widehat{\Var}(\bar p_k)$ denote the estimated variance of the microbatch mean for that coordinate. The relevant inverse functional is
\begin{equation}
    T(p) = \frac{1}{p+\lambda}.
\end{equation}
Its second-order expansion gives
\begin{equation}
    \E[T(\bar p_k)]
    \approx
    T(p_k)
    +
    \frac{1}{2}T''(p_k)\Var(\bar p_k).
\end{equation}
For this functional, the curvature and leading finite-sample bias are
\begin{equation}
    T''(p) = \frac{2}{(p+\lambda)^3},
    \qquad
    \frac{\Var(\bar p_k)}{(p_k+\lambda)^3}.
\end{equation}
Replacing unknown population quantities with batch estimates gives the corrected inverse
\begin{equation}
    \widetilde T(\bar p_k)
    =
    \frac{1}{\bar p_k+\lambda}
    -
    \frac{\widehat{\Var}(\bar p_k)}{(\bar p_k+\lambda)^3}.
    \label{eq:diag-correction}
\end{equation}
The variance term is the variance of the microbatch mean, not the variance of a single microbatch statistic. If $p_{j,k}$ is the $k$th preconditioner entry from microbatch $B_j$, then
\begin{equation}
    \widehat{\Var}(\bar p_k)
    =
    \frac{1}{m(m-1)}
    \sum_{j=1}^m (p_{j,k}-\bar p_k)^2,
    \qquad
    \bar p_k=\frac{1}{m}\sum_{j=1}^m p_{j,k}.
    \label{eq:variance-of-mean-estimator}
\end{equation}
The factor $m-1$ gives the unbiased sample variance across microbatches, and the additional factor $m$ converts the variance of microbatch statistics into the variance of their mean. For numerical stability, we clip the corrected inverse before applying the update:
\begin{equation}
    \widetilde T(\bar p_k)
    \leftarrow
    \max\left\{0, \widetilde T(\bar p_k)\right\}.
\end{equation}
The resulting bias-corrected update is
\begin{equation}
    \hat u_{\mathrm{bc}}
    =
    \widetilde P_B^{-1}\hat g_A,
    \label{eq:bias-corrected-update}
\end{equation}
where $\widetilde P_B^{-1}$ is the diagonal matrix with entries given by Equation~\eqref{eq:diag-correction}. Appendix~\ref{app:delta-method-inverse} gives the delta-method derivation and explains why replacing the unknown population value $p_k$ by the observed batch estimate $\bar p_k$ introduces only higher-order error.

This correction is cheap. It requires storing the microbatch mean and variance of the preconditioner estimate, not recomputing multiple inverses. For AdamW, $p$ corresponds to the denominator induced by the second-moment estimate, such as $p_k=\sqrt{v_k}$. For Sophia, $p$ corresponds to the diagonal curvature or Hessian-proxy estimate used in the denominator.

\subsubsection{Jackknife correction}
\label{app:jackknife-correction}

A more general single-batch correction uses the jackknife. The inverse functional, leave-one-microbatch-out preconditioner, and jackknife-corrected inverse are
\begin{equation}
    T(P) = (P+\lambda I)^{-1}.
\end{equation}
\begin{equation}
    \bar P_{(-j)}
    =
    \frac{1}{m-1}\sum_{\ell \neq j}\hat P_\ell
\end{equation}
\begin{equation}
    T_{\mathrm{jack}}
    =
    m T(\bar P)
    -
    \frac{m-1}{m}\sum_{j=1}^m T(\bar P_{(-j)}).
    \label{eq:jackknife}
\end{equation}
Under standard smoothness conditions for the functional $T$, the jackknife cancels the leading $O(1/m)$ finite-sample bias term, leaving a smaller higher-order bias. This correction is more general than the diagonal Taylor correction, but it can be more expensive because it requires evaluating the inverse functional on $m$ leave-one-out estimates.

For diagonal preconditioners, Equation~\eqref{eq:jackknife} is inexpensive and can be used directly. For matrix preconditioners such as Shampoo, it may be too expensive to compute every step. In that case, the jackknife can be applied periodically, blockwise, or only to small structured preconditioner blocks.

\subsubsection{Matrix-preconditioner correction for Shampoo}

For Shampoo, the preconditioner is matrix-valued or block-matrix-valued, and the correction can be applied in an eigenbasis. Suppose
\begin{equation}
    \bar P = Q\Lambda Q^\top.
\end{equation}
We project each microbatch preconditioner into this basis,
\begin{equation}
    \hat P_j' = Q^\top \hat P_j Q.
\end{equation}
The diagonal entries of $\hat P_j'$ estimate variability along the eigen-directions of $\bar P$. Let $\lambda_k$ be the $k$th eigenvalue of $\bar P$ and let $\widehat{\Var}(\bar \lambda_k)$ be the estimated variance of the corresponding projected microbatch mean. For an inverse-root preconditioner, define the scalar spectral map
\begin{equation}
    T_\alpha(x)=(x+\lambda_0)^{-\alpha}.
\end{equation}
For matrix parameters in Shampoo, $\alpha=1/4$. The corrected inverse-root eigenvalue and preconditioner are
\begin{equation}
    \widetilde d_k
    =
    T_\alpha(\lambda_k)
    -
    \frac{1}{2}T_\alpha''(\lambda_k)\widehat{\Var}(\bar \lambda_k).
\end{equation}
\begin{equation}
    \widetilde P^{-\alpha}
    =
    Q\,\mathrm{diag}(\widetilde d_1,\ldots,\widetilde d_d)\,Q^\top.
\end{equation}
This avoids repeated eigendecompositions. It uses the eigendecomposition already required by Shampoo-style matrix preconditioning and adds only a variance estimate in the current eigenbasis. Appendix~\ref{app:inverse-root-proof} gives the bias and residual-error analysis for the inverse-root correction, including the eigenbasis-of-$\bar P$ approximation.

\paragraph{Interpretation: numerator vs.\ denominator.}
The two correction components can be interpreted by comparing numerator and denominator signals. When both the numerator and denominator are large, or both small, the cross-fitted update resembles same-batch behavior: gradient magnitude and preconditioner agree about the local scale of the step. The informative cases are asymmetric: a large numerator paired with a small independent denominator is unsupported, and the inverse-bias correction shrinks the inverse denominator in high-variance directions, keeping the step conservative; a small numerator paired with a large denominator is benign because little gradient signal is available. The correction therefore damps unsupported large steps while leaving reliable large-gradient, large-denominator cases controlled.

Algorithm~\ref{alg:bc-step} summarizes the proposed method. The convergence theorems are given in Section~\ref{sec:convergence}, and detailed optimizer instantiations are given in Appendix~\ref{sec:algorithm}.

\begin{algorithm}[t]
\caption{Bias-corrected preconditioned step at parameters $\theta_t$.}
\label{alg:bc-step}
\begin{algorithmic}[1]
\Require parameters $\theta_t$; batch $\mathcal{B}_t$; microbatch count $m$; damping $\lambda$; learning rate $\eta$
\State Split $\mathcal{B}_t$ into two independent groups $A$ (numerator) and $B$ (denominator)
\State $\hat g_A \gets \frac{1}{|A|}\sum_{z\in A}\nabla\ell(\theta_t;z)$ \Comment{numerator gradient estimate}
\State Split $B$ into microbatches $B_1,\ldots,B_m$
\For{$j=1,\ldots,m$}
    \State Compute microbatch preconditioner $\hat P_j \gets \hat P_{B_j}(\theta_t)$
\EndFor
\State $\bar P_B \gets \frac{1}{m}\sum_{j=1}^m \hat P_j$ \Comment{averaged denominator}
\If{diagonal preconditioner (AdamW, Sophia)}
    \State $\widehat{\Var}(\bar P_B) \gets \frac{1}{m(m-1)}\sum_{j=1}^m (\hat P_j - \bar P_B)^{\odot 2}$
    \State $\widetilde H_B \gets (\bar P_B+\lambda)^{-1} - \widehat{\Var}(\bar P_B)\,(\bar P_B+\lambda)^{-3}$ \Comment{Eq.~\eqref{eq:diag-correction}}
\Else \Comment{matrix preconditioner (Shampoo)}
    \State Eigendecompose $\bar P_B = Q\Lambda Q^\top$
    \State Project $\hat P_j$ into the eigenbasis and estimate $\widehat{\Var}(\bar\lambda_k)$ for each eigen-direction
    \State Apply Eq.~\eqref{eq:invroot-correction} eigen-direction-wise to obtain the corrected inverse-root operator $\widetilde H_B$
\EndIf
\State $\theta_{t+1} \gets \theta_t - \eta\, \widetilde H_B\,\hat g_A$
\end{algorithmic}
\end{algorithm}

\section{Convergence analysis}
\label{sec:convergence}

For a nonconvex language-model loss, one generally cannot prove that an optimizer reaches a better global optimum. A correct statement is more local and estimator-based: the bias-corrected update is designed to reduce the targeted stochastic-update bias components relative to the standard same-batch preconditioned update; under smooth nonconvex assumptions, smaller update bias improves the stationarity bound; and under a PL or locally strongly-convex-type condition, smaller update bias improves the linear contraction constant and the limiting suboptimality floor. This is consistent with biased-SGD theory, where estimator bias changes attainable accuracy and convergence constants \citep{ajalloeian2020convergence}, and with existing adaptive and preconditioned optimizer analyses that prove convergence under smoothness, bounded variance, and bounded preconditioner assumptions.

We analyze a generic stochastic preconditioned update
\begin{equation}
    \theta_{t+1}
    =
    \theta_t-\eta \hat u_t,
    \qquad
    \hat u_t
    =
    \widehat H_t \hat g_t,
\end{equation}
where the target population update is
\begin{equation}
    u_t^\star
    =
    H_t \nabla f(\theta_t),
    \qquad
    H_t=P_t^{-1}.
\end{equation}
For AdamW, $H_t$ is a diagonal inverse second-moment preconditioner; for Sophia, it is a diagonal inverse curvature preconditioner; and for Shampoo, it is a structured inverse-root matrix preconditioner.

\paragraph{Conditional update bias.}
Let $\mathcal F_t$ denote the history before sampling minibatches at step $t$. Define the conditional stochastic-update bias
\begin{equation}
    b_t
    =
    \E[\hat u_t\mid \mathcal F_t]
    -
    H_t\nabla f(\theta_t),
\end{equation}
and the conditional update variance
\begin{equation}
    \sigma_t^2
    =
    \E\!\left[
    \left\|
    \hat u_t-\E[\hat u_t\mid\mathcal F_t]
    \right\|^2
    \mid \mathcal F_t
    \right].
\end{equation}

Theorem~\ref{thm:bias-reduction} below makes the connection to Section~\ref{sec:method} precise: cross-fitting removes the coupling component of $b_t$, and delta-method correction reduces the inverse component of $b_t$ from $O(m^{-1})$ to $O(m^{-3/2})$. Theorems~\ref{thm:nonconvex} and~\ref{thm:pl} then translate this estimator-level bias reduction into convergence guarantees under standard assumptions.

\begin{theorem}[Bias reduction by cross-fitting and inverse correction]
\label{thm:bias-reduction}
Fix $\theta_t$. Let $A$ and $B$ be independent minibatch groups sampled from the same data distribution. Let
\begin{equation}
    \hat g_A
    \quad\text{and}\quad
    \widehat H_B
\end{equation}
be the stochastic gradient and inverse-preconditioner estimates, respectively. Suppose
\begin{equation}
    \E[\hat g_A\mid\mathcal F_t]=\nabla f(\theta_t),
\end{equation}
and suppose the inverse-preconditioner map is three-times continuously differentiable in a neighborhood of the population preconditioner, with bounded third derivative and bounded microbatch moments. If the preconditioner is estimated from $m$ independent microbatches, then the standard same-batch estimator satisfies
\begin{align}
    &\E[\widehat H_A\hat g_A\mid\mathcal F_t]
    -
    H_t\nabla f(\theta_t) \nonumber \\
    &\qquad =
    \underbrace{
    \mathrm{Cov}(\widehat H_A,\hat g_A\mid\mathcal F_t)
    }_{\text{coupling bias}}
    +
    \underbrace{
    \left(\E[\widehat H_A\mid\mathcal F_t]-H_t\right)
    \nabla f(\theta_t)
    }_{\text{inverse bias}},
\end{align}
where $\mathrm{Cov}(\widehat H_A,\hat g_A\mid\mathcal F_t)$ denotes the cross-moment
$\E[\widehat H_A\hat g_A\mid\mathcal F_t]
-\E[\widehat H_A\mid\mathcal F_t]\E[\hat g_A\mid\mathcal F_t]$.
The cross-fitted estimator removes the coupling term:
\begin{equation}
    \E[\widehat H_B\hat g_A\mid\mathcal F_t]
    -
    H_t\nabla f(\theta_t)
    =
    \left(\E[\widehat H_B\mid\mathcal F_t]-H_t\right)
    \nabla f(\theta_t).
\end{equation}
If the delta-method correction is applied to the inverse-preconditioner estimate, then, for diagonal preconditioners or for matrix preconditioners whose microbatch perturbations are diagonal in a fixed population-aligned eigendirection basis,
\begin{equation}
    \E[\widetilde H_B\mid\mathcal F_t]
    =
    H_t + O(m^{-3/2}),
\end{equation}
whereas the uncorrected inverse estimator has
\begin{equation}
    \E[\widehat H_B\mid\mathcal F_t]
    =
    H_t + O(m^{-1}).
\end{equation}
Consequently,
\begin{equation}
    \left\|
    \E[\widetilde H_B\hat g_A\mid\mathcal F_t]
    -
    H_t\nabla f(\theta_t)
    \right\|
    =
    O(m^{-3/2})\|\nabla f(\theta_t)\|.
\end{equation}
\end{theorem}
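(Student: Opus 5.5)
The plan has four parts: an algebraic decomposition, a factorization by independence, a third-order Taylor expansion in the diagonal case, and a reduction of the matrix case to the diagonal one.

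\textbf{Step 1: same-batch decomposition.} This is pure algebra. Write
\begin{equation}
\E[\widehat H_A\hat g_A\mid\mathcal F_t]-H_t\nabla f(\theta_t)
=\Bigl(\E[\widehat H_A\hat g_A\mid\mathcal F_t]-\E[\widehat H_A\mid\mathcal F_t]\E[\hat g_A\mid\mathcal F_t]\Bigr)
+\Bigl(\E[\widehat H_A\mid\mathcal F_t]\E[\hat g_A\mid\mathcal F_t]-H_t\nabla f(\theta_t)\Bigr).
\end{equation}
Then substitute $\E[\hat g_A\mid\mathcal F_t]=\nabla f(\theta_t)$ in the second bracket.

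\textbf{Step 2: cross-fitting.} Conditionally on $\mathcal F_t$, $\theta_t$ is fixed and $A,B$ are independent. Hence $\widehat H_B$ and $\hat g_A$ are conditionally independent, and $\E[\widehat H_B\hat g_A\mid\mathcal F_t]=\E[\widehat H_B\mid\mathcal F_t]\nabla f(\theta_t)$. The covariance term therefore vanishes identically.

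\textbf{Step 3: diagonal case.} Here the analysis is coordinatewise with $T(p)=(p+\lambda)^{-1}$.

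\emph{Moment bounds.} Let $\delta=\bar p_k-p_k$, where $\bar p_k$ is a mean of $m$ i.i.d. microbatch statistics with bounded moments. Then $\E\delta=0$, $\E\delta^2=\sigma^2/m$, $\E|\delta|^3=O(m^{-3/2})$, and $\E\delta^4=O(m^{-2})$.

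\emph{Uncorrected bias.} Taylor's theorem with bounded third derivative gives
\begin{equation}
\E T(\bar p_k)=T(p_k)+\tfrac12T''(p_k)\sigma^2/m+O(m^{-3/2}),
\end{equation}
which is the $O(m^{-1})$ claim for $\widehat H_B$.

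\emph{Corrected bias.} We must show that
\begin{equation}
\E\bigl[\widehat{\Var}(\bar p_k)(\bar p_k+\lambda)^{-3}\bigr]=\tfrac12T''(p_k)\sigma^2/m+O(m^{-3/2}).
\end{equation}
Write $(\bar p_k+\lambda)^{-3}=(p_k+\lambda)^{-3}+O(|\delta|)$ using a bounded derivative on the neighborhood. Since $\widehat{\Var}$ is unbiased for $\sigma^2/m$, the leading term matches exactly. The remainder is $\E[\widehat{\Var}\cdot O(|\delta|)]$, which Cauchy--Schwarz bounds by $(\E\widehat{\Var}^2)^{1/2}(\E\delta^2)^{1/2}=O(m^{-1})\,O(m^{-1/2})$.

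\emph{Nuisance terms.} Two further pieces must be handled:
\begin{itemize}
\item the event that $\bar p_k$ leaves the neighborhood, where the Taylor bounds fail;
\item the clip $\max\{0,\cdot\}$.
\end{itemize}
Both occur only on a tail event. With bounded moments and a Chebyshev/Markov bound, that event has probability $O(m^{-2})$ or smaller, so it contributes at higher order. I expect the localization argument to need a mild assumption, e.g.\ that $\widetilde T$ stays bounded, which follows from $\lambda>0$ and the clip.

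\textbf{Step 4: matrix case, and the conclusion.} Under the assumption that microbatch perturbations are diagonal in a fixed population eigenbasis $Q$, every $\hat P_j$, and hence $\bar P$, is simultaneously diagonalized by $Q$. The spectral map $T_\alpha$ therefore acts on eigenvalues coordinatewise, and the projected variances are exactly the diagonal variances. This reduces the matrix case to Step 3 with $T_\alpha(x)=(x+\lambda_0)^{-\alpha}$, which is smooth on $x>-\lambda_0$; a uniform bound over the $d$ directions gives an operator-norm $O(m^{-3/2})$. The final bound then follows from Step 2 and submultiplicativity:
\begin{equation}
\|\E[\widetilde H_B\hat g_A\mid\mathcal F_t]-H_t\nabla f\|=\|(\E\widetilde H_B-H_t)\nabla f\|\le\|\E\widetilde H_B-H_t\|\,\|\nabla f\|.
\end{equation}

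\textbf{Main obstacle.} The hardest step is controlling the plug-in error in Step 3: evaluating $T''$ at $\bar p_k$ rather than $p_k$, combined with the dependence between $\widehat{\Var}$ and $\bar p_k$. I will try the Cauchy--Schwarz route first, since it needs only fourth moments. If that is too loose, I will expand $(\bar p_k+\lambda)^{-3}$ to first order and use $\E[\widehat{\Var}\,\delta]=\kappa_3/m^2$, the third-cumulant identity for the sample variance and mean.
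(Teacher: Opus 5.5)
Your proposal is correct and follows the same route as the paper. The steps match: the algebraic split into cross-moment plus inverse bias, factorization by conditional independence of $A$ and $B$, a second-order Taylor expansion whose leading $\tfrac12T''\Var(\bar x)$ term is cancelled by the plug-in estimate up to $O(m^{-3/2})$ (applied coordinatewise or eigendirection-wise), and a final operator-norm bound. Your Cauchy--Schwarz control of $\E[\widehat{\Var}(\bar p_k)\,|\delta|]$ and your handling of the tail and clipping events are more careful than the paper's sketch, which expands the curvature factor against the population $\Var(\bar p)$ rather than $\widehat{\Var}(\bar p)$ and ignores the clip.
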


For Shampoo, the implementation applies the same scalar correction in the empirical eigenbasis of the averaged preconditioner; Appendix~\ref{app:inverse-root-proof} states the resulting basis-rotation approximation explicitly.

\begin{proof}
For the same-batch estimator,
\begin{equation}
    \E[\widehat H_A\hat g_A\mid\mathcal F_t]
    =
    \E[\widehat H_A\mid\mathcal F_t]\E[\hat g_A\mid\mathcal F_t]
    +
    \mathrm{Cov}(\widehat H_A,\hat g_A\mid\mathcal F_t).
\end{equation}
Since $\E[\hat g_A\mid\mathcal F_t]=\nabla f(\theta_t)$, subtracting
$H_t\nabla f(\theta_t)$ gives the stated decomposition.

For cross-fitting, $\widehat H_B$ and $\hat g_A$ are conditionally independent given $\mathcal F_t$, so
\begin{equation}
    \E[\widehat H_B\hat g_A\mid\mathcal F_t]
    =
    \E[\widehat H_B\mid\mathcal F_t]
    \E[\hat g_A\mid\mathcal F_t].
\end{equation}
Thus the covariance term vanishes.

For the inverse correction, write the scalar inverse or inverse-root map as $T(x)$. If $\bar x$ is the mean of $m$ microbatch estimates with mean $x$, then Taylor expansion gives
\begin{equation}
    \E[T(\bar x)]
    =
    T(x)
    +
    \frac{1}{2}T''(x)\Var(\bar x)
    +
    O(m^{-3/2}).
\end{equation}
The delta-method estimator subtracts
\begin{equation}
    \frac{1}{2}T''(\bar x)\widehat{\Var}(\bar x),
\end{equation}
whose expectation equals the leading bias term up to $O(m^{-3/2})$. Therefore the corrected inverse estimate has residual bias $O(m^{-3/2})$. Applying this coordinatewise, or eigendirection-wise under the commuting perturbation condition in the theorem statement, gives the result.
\end{proof}

The estimator-level bias reduction in Theorem~\ref{thm:bias-reduction} is the input to the convergence analysis below. We now state the assumptions used by the convergence theorems and follow with the theorems themselves. Long proofs are deferred to Appendix~\ref{app:convergence-proofs}; we keep one-paragraph sketches in the body.

\begin{assumption}[Smoothness and bounded preconditioning]
\label{ass:smooth}
The objective $f$ is $L$-smooth and lower bounded by $f_\star$. The population inverse preconditioners $H_t$ are symmetric positive definite and satisfy
\begin{equation}
    \mu_H I \preceq H_t \preceq M_H I
\end{equation}
for constants $0<\mu_H\le M_H<\infty$. The stochastic update has bounded relative bias and bounded conditional variance:
\begin{equation}
    \begin{aligned}
    \left\|
    \E[\hat u_t\mid\mathcal F_t]
    -
    H_t\nabla f(\theta_t)
    \right\|
    \le
    \rho_{\mathrm{bias}}\,\mu_H\|\nabla f(\theta_t)\|,
    \qquad
    0\le \rho_{\mathrm{bias}}<1,\\
    \E\!\left[
    \left\|
    \hat u_t-\E[\hat u_t\mid\mathcal F_t]
    \right\|^2
    \mid\mathcal F_t
    \right]
    \le
    \sigma^2.
    \end{aligned}
\end{equation}
\end{assumption}

\begin{assumption}[PL condition]
\label{ass:pl}
The objective satisfies the Polyak--Lojasiewicz condition with constant $\mu_{\mathrm{PL}}>0$:
\begin{equation}
    \frac{1}{2}\|\nabla f(\theta)\|^2
    \ge
    \mu_{\mathrm{PL}}(f(\theta)-f_\star)
    \quad
    \text{for all iterates } \theta.
\end{equation}
\end{assumption}

The parameter $\rho_{\mathrm{bias}}$ in Assumption~\ref{ass:smooth} is the relative update-bias level. Theorem~\ref{thm:bias-reduction} shows how cross-fitting and inverse correction reduce the bias terms that contribute to $\rho_{\mathrm{bias}}$, up to the variance introduced by sample splitting.

\begin{theorem}[Nonconvex stationarity bound]
\label{thm:nonconvex}
Under Assumption~\ref{ass:smooth}, choose a constant step size $\eta$ satisfying
\begin{equation}
    0<\eta
    \le
    \frac{\mu_H(1-\rho_{\mathrm{bias}})}
    {L(M_H+\rho_{\mathrm{bias}}\mu_H)^2}.
\end{equation}
Then after $T$ steps,
\begin{equation}
    \frac{1}{T}
    \sum_{t=0}^{T-1}
    \E\|\nabla f(\theta_t)\|^2
    \le
    \frac{2(f(\theta_0)-f_\star)}
    {\eta T\mu_H(1-\rho_{\mathrm{bias}})}
    +
    \frac{L\eta\sigma^2}
    {\mu_H(1-\rho_{\mathrm{bias}})}.
\end{equation}
\end{theorem}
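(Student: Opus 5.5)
The plan is the standard descent-lemma argument for biased stochastic methods, carried out one step at a time and then telescoped. Write $g_t=\nabla f(\theta_t)$, $\bar u_t=\E[\hat u_t\mid\mathcal F_t]$ and $b_t=\bar u_t-H_tg_t$. Assumption~\ref{ass:smooth} gives $\|b_t\|\le\rho_{\mathrm{bias}}\mu_H\|g_t\|$.

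\textbf{Step 1: descent lemma.} By $L$-smoothness,
\begin{equation}
f(\theta_{t+1})\le f(\theta_t)-\eta\langle g_t,\hat u_t\rangle+\tfrac{L\eta^2}{2}\|\hat u_t\|^2 .
\end{equation}
I will take conditional expectations given $\mathcal F_t$ and bound the two resulting terms separately.

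\textbf{Step 2: the inner product.} Expanding $\bar u_t=H_tg_t+b_t$ gives
\begin{equation}
\langle g_t,\bar u_t\rangle=g_t^\top H_tg_t+\langle g_t,b_t\rangle .
\end{equation}
The first term is at least $\mu_H\|g_t\|^2$ by the lower spectral bound on $H_t$. By Cauchy--Schwarz and the bias bound, the second term is at least $-\rho_{\mathrm{bias}}\mu_H\|g_t\|^2$. Together, $\langle g_t,\bar u_t\rangle\ge\mu_H(1-\rho_{\mathrm{bias}})\|g_t\|^2$. This step is why the bias is measured relative to $\mu_H$ and why $\rho_{\mathrm{bias}}<1$ is required.

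\textbf{Step 3: the second moment.} Use the bias--variance split $\E[\|\hat u_t\|^2\mid\mathcal F_t]=\|\bar u_t\|^2+(\text{conditional variance})$. The triangle inequality gives $\|\bar u_t\|\le\|H_tg_t\|+\|b_t\|\le(M_H+\rho_{\mathrm{bias}}\mu_H)\|g_t\|$. Hence
\begin{equation}
\E[\|\hat u_t\|^2\mid\mathcal F_t]\le(M_H+\rho_{\mathrm{bias}}\mu_H)^2\|g_t\|^2+\sigma^2 .
\end{equation}

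\textbf{Step 4: absorb the quadratic term.} Substituting Steps 2 and 3 into Step 1, the coefficient of $\|g_t\|^2$ becomes
\begin{equation}
-\eta\mu_H(1-\rho_{\mathrm{bias}})+\tfrac{L\eta^2}{2}(M_H+\rho_{\mathrm{bias}}\mu_H)^2 .
\end{equation}
The step-size condition is exactly $L\eta(M_H+\rho_{\mathrm{bias}}\mu_H)^2\le\mu_H(1-\rho_{\mathrm{bias}})$. Under it, this coefficient is at most $-\tfrac{\eta}{2}\mu_H(1-\rho_{\mathrm{bias}})$, so
\begin{equation}
\E[f(\theta_{t+1})\mid\mathcal F_t]\le f(\theta_t)-\tfrac{\eta\mu_H(1-\rho_{\mathrm{bias}})}{2}\|g_t\|^2+\tfrac{L\eta^2\sigma^2}{2}.
\end{equation}
This is the only place where the precise form of the step-size bound matters, so I would check this arithmetic carefully.

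\textbf{Step 5: telescope.} Take total expectations and sum over $t=0,\dots,T-1$. Use $f(\theta_T)\ge f_\star$ to drop the final term. Then multiply by $2/(\eta\mu_H(1-\rho_{\mathrm{bias}})T)$. This gives exactly the stated bound: the initialization term $2(f(\theta_0)-f_\star)/(\eta T\mu_H(1-\rho_{\mathrm{bias}}))$ and the noise floor $L\eta\sigma^2/(\mu_H(1-\rho_{\mathrm{bias}}))$.

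I expect no real obstacle; Steps 2 and 4 are the only places needing care. The one subtlety is that $\theta_t$ is $\mathcal F_t$-measurable, so $g_t$ and $H_t$ can be pulled out of the conditional expectation. I would state this explicitly, since Assumption~\ref{ass:smooth} treats $H_t$ as a population quantity at $\theta_t$ rather than a random estimate.
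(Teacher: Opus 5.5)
Your proposal is correct and follows the paper's proof essentially step for step: the $L$-smoothness descent inequality, the lower bound $\langle\nabla f(\theta_t),\E[\hat u_t\mid\mathcal F_t]\rangle\ge\mu_H(1-\rho_{\mathrm{bias}})\|\nabla f(\theta_t)\|^2$, the second-moment bound $(M_H+\rho_{\mathrm{bias}}\mu_H)^2\|\nabla f(\theta_t)\|^2+\sigma^2$, absorption via the step-size condition into a descent coefficient of at least $\tfrac12\eta\mu_H(1-\rho_{\mathrm{bias}})$, and telescoping with $f(\theta_T)\ge f_\star$. Your arithmetic in Steps 4 and 5 checks out, and stating explicitly that $\theta_t$ and $H_t$ are $\mathcal F_t$-measurable is a small improvement over the paper's write-up.
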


\paragraph{Proof sketch.}
By $L$-smoothness, $f(\theta_{t+1})\le f(\theta_t)-\eta\langle\nabla f(\theta_t),\hat u_t\rangle+\frac{L\eta^2}{2}\|\hat u_t\|^2$. Taking conditional expectation, the bounded relative bias gives $\langle\nabla f,\E[\hat u_t\mid\mathcal F_t]\rangle\ge\mu_H(1-\rho_{\mathrm{bias}})\|\nabla f\|^2$, while the bounded conditional variance gives $\E[\|\hat u_t\|^2\mid\mathcal F_t]\le(M_H+\rho_{\mathrm{bias}}\mu_H)^2\|\nabla f\|^2+\sigma^2$. Combining yields a per-step descent inequality with coefficient $c_{\mathrm{bias}}=\eta\mu_H(1-\rho_{\mathrm{bias}})-\frac{L\eta^2}{2}(M_H+\rho_{\mathrm{bias}}\mu_H)^2$, and the step-size condition implies $c_{\mathrm{bias}}\ge\frac{1}{2}\eta\mu_H(1-\rho_{\mathrm{bias}})$. Summing over $t=0,\ldots,T-1$ and using $f(\theta_T)\ge f_\star$ gives the claim. The full argument is in Appendix~\ref{app:convergence-proofs}.

This theorem formalizes the estimator-level convergence statement. If the corrected method has $\rho_{\mathrm{bias},\mathrm{BC}}<\rho_{\mathrm{bias},\mathrm{std}}$ and does not increase $\sigma^2$ too much, then the upper bound is smaller. If sample splitting increases variance, the theorem also explains why an optimizer can fail under fixed compute: $\rho_{\mathrm{bias}}$ improves, but $\sigma^2$ worsens.

\begin{theorem}[Linear convergence to a smaller neighborhood]
\label{thm:pl}
Under Assumptions~\ref{ass:smooth} and~\ref{ass:pl}, with the step size of Theorem~\ref{thm:nonconvex}, define
\begin{equation}
    c_{\mathrm{bias}}
    =
    \eta\mu_H(1-\rho_{\mathrm{bias}})
    -
    \frac{L\eta^2}{2}(M_H+\rho_{\mathrm{bias}}\mu_H)^2
    >
    0.
\end{equation}
Assume also that the step size is small enough that $0<2\mu_{\mathrm{PL}}c_{\mathrm{bias}}<1$. Then
\begin{equation}
    \E[f(\theta_T)-f_\star]
    \le
    (1-2\mu_{\mathrm{PL}}c_{\mathrm{bias}})^T
    (f(\theta_0)-f_\star)
    +
    \frac{L\eta^2\sigma^2}
    {4\mu_{\mathrm{PL}}c_{\mathrm{bias}}}.
\end{equation}
\end{theorem}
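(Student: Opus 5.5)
The argument will reuse the one-step descent inequality from the proof of Theorem~\ref{thm:nonconvex} and then close it with the PL condition. Write $\Delta_t=f(\theta_t)-f_\star$.

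\textbf{Step 1: one-step descent.} By $L$-smoothness,
\begin{equation}
f(\theta_{t+1})\le f(\theta_t)-\eta\langle\nabla f(\theta_t),\hat u_t\rangle+\tfrac{L\eta^2}{2}\|\hat u_t\|^2 .
\end{equation}
Take $\E[\cdot\mid\mathcal F_t]$. Write $\E[\hat u_t\mid\mathcal F_t]=H_t\nabla f+b_t$ with $\|b_t\|\le\rho_{\mathrm{bias}}\mu_H\|\nabla f\|$. Then $\langle\nabla f,H_t\nabla f\rangle\ge\mu_H\|\nabla f\|^2$, and Cauchy--Schwarz gives
\begin{equation}
\langle\nabla f,\E[\hat u_t\mid\mathcal F_t]\rangle\ge\mu_H(1-\rho_{\mathrm{bias}})\|\nabla f\|^2 .
\end{equation}
For the second moment, use the bias--variance split
\begin{equation}
\E[\|\hat u_t\|^2\mid\mathcal F_t]=\|\E[\hat u_t\mid\mathcal F_t]\|^2+\sigma_t^2\le (M_H+\rho_{\mathrm{bias}}\mu_H)^2\|\nabla f\|^2+\sigma^2 .
\end{equation}
Together these give
\begin{equation}
\E[f(\theta_{t+1})\mid\mathcal F_t]\le f(\theta_t)-c_{\mathrm{bias}}\|\nabla f(\theta_t)\|^2+\tfrac{L\eta^2}{2}\sigma^2 .
\end{equation}
The step-size condition of Theorem~\ref{thm:nonconvex} guarantees $c_{\mathrm{bias}}\ge\frac12\eta\mu_H(1-\rho_{\mathrm{bias}})>0$.

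\textbf{Step 2: PL contraction.} Assumption~\ref{ass:pl} gives $\|\nabla f(\theta_t)\|^2\ge 2\mu_{\mathrm{PL}}\Delta_t$. Because $c_{\mathrm{bias}}>0$, we may substitute this lower bound. Taking total expectations yields the recursion
\begin{equation}
\E\Delta_{t+1}\le(1-2\mu_{\mathrm{PL}}c_{\mathrm{bias}})\,\E\Delta_t+\tfrac{L\eta^2\sigma^2}{2}.
\end{equation}
By assumption $q:=1-2\mu_{\mathrm{PL}}c_{\mathrm{bias}}\in(0,1)$.

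\textbf{Step 3: unroll.} Unrolling the recursion gives
\begin{equation}
\E\Delta_T\le q^T\Delta_0+\tfrac{L\eta^2\sigma^2}{2}\sum_{s<T}q^s .
\end{equation}
Bound the geometric sum by $1/(1-q)=1/(2\mu_{\mathrm{PL}}c_{\mathrm{bias}})$. The resulting floor is $\frac{L\eta^2\sigma^2}{4\mu_{\mathrm{PL}}c_{\mathrm{bias}}}$, which matches the stated constant exactly.

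\textbf{Main obstacle.} The only delicate point is the second-moment bound in Step 1. The bias--variance decomposition must be applied conditionally, and $M_H$ must bound $\|H_t\nabla f\|$ via $H_t\preceq M_H I$ together with symmetric positive definiteness. Everything else is routine bookkeeping. We also need the observation that the step-size condition keeps $c_{\mathrm{bias}}$ positive, so that the PL lower bound can be inserted with the correct sign.
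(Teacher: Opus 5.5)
Your proposal is correct and follows the paper's proof essentially step for step. You use the conditional descent recursion from the proof of Theorem~\ref{thm:nonconvex}, insert the PL lower bound (valid because $c_{\mathrm{bias}}>0$), and unroll the affine contraction, bounding the geometric sum by $1/(2\mu_{\mathrm{PL}}c_{\mathrm{bias}})$; this gives exactly the stated floor, with the geometric-series bookkeeping that the paper leaves implicit written out.
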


\paragraph{Proof sketch.}
The per-step descent inequality from Theorem~\ref{thm:nonconvex} combined with the PL condition $\|\nabla f(\theta_t)\|^2\ge 2\mu_{\mathrm{PL}}(f(\theta_t)-f_\star)$ gives a linear contraction $\E[f(\theta_{t+1})-f_\star\mid\mathcal F_t]\le(1-2\mu_{\mathrm{PL}}c_{\mathrm{bias}})(f(\theta_t)-f_\star)+\frac{L\eta^2\sigma^2}{2}$. Unrolling the recursion produces the stated geometric decay plus an asymptotic neighborhood. The full argument is in Appendix~\ref{app:convergence-proofs}.

This theorem does not claim a better global optimum in a nonconvex language model. It says that the method converges to a smaller objective neighborhood when the corrected estimator has a better effective bias--variance tradeoff.

\begin{corollary}[When bias correction improves convergence]
\label{cor:bc-better}
Let standard and bias-corrected updates satisfy Assumption~\ref{ass:smooth} with a common step size and parameters
\begin{equation}
    (\rho_{\mathrm{bias},\mathrm{std}},\sigma_{\mathrm{std}}^2)
    \quad\text{and}\quad
    (\rho_{\mathrm{bias},\mathrm{BC}},\sigma_{\mathrm{BC}}^2).
\end{equation}
Let $c_{\mathrm{std}}$ and $c_{\mathrm{BC}}$ denote the corresponding constants from Theorem~\ref{thm:pl}. If
\begin{equation}
    c_{\mathrm{BC}}>c_{\mathrm{std}},
\end{equation}
and both methods satisfy the contraction condition in Theorem~\ref{thm:pl}, then the PL contraction factor is better for the bias-corrected method:
\begin{equation}
    1-2\mu_{\mathrm{PL}}c_{\mathrm{BC}}
    <
    1-2\mu_{\mathrm{PL}}c_{\mathrm{std}}.
\end{equation}
If additionally
\begin{equation}
    \frac{\sigma_{\mathrm{BC}}^2}{c_{\mathrm{BC}}}
    <
    \frac{\sigma_{\mathrm{std}}^2}{c_{\mathrm{std}}},
\end{equation}
then the limiting suboptimality neighborhood is also smaller for the bias-corrected method.
\end{corollary}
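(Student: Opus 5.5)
The corollary follows directly from the closed-form bound in Theorem~\ref{thm:pl}, applied separately to the standard and the bias-corrected updates. Since both satisfy Assumption~\ref{ass:smooth} with a common step size $\eta$ and both satisfy the contraction condition, Theorem~\ref{thm:pl} gives
\begin{equation}
    \E[f(\theta_T)-f_\star]
    \le
    (1-2\mu_{\mathrm{PL}}c_{\bullet})^T(f(\theta_0)-f_\star)
    +
    \frac{L\eta^2\sigma_{\bullet}^2}{4\mu_{\mathrm{PL}}c_{\bullet}},
    \qquad \bullet\in\{\mathrm{std},\mathrm{BC}\}.
\end{equation}
Throughout I would treat $L$, $\eta$ and $\mu_{\mathrm{PL}}$ as shared across the two methods, so that only $c_\bullet$ and $\sigma_\bullet^2$ differ.

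For the contraction factor, the map $c\mapsto 1-2\mu_{\mathrm{PL}}c$ is strictly decreasing because $\mu_{\mathrm{PL}}>0$. The hypothesis $c_{\mathrm{BC}}>c_{\mathrm{std}}$ therefore immediately gives $1-2\mu_{\mathrm{PL}}c_{\mathrm{BC}}<1-2\mu_{\mathrm{PL}}c_{\mathrm{std}}$. The contraction condition $0<2\mu_{\mathrm{PL}}c_\bullet<1$ ensures both factors lie in $(0,1)$. This makes the comparison meaningful as a comparison of linear rates, and it also means the powers $(\cdot)^T$ preserve the strict ordering for every $T\ge1$.

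For the neighborhood, I would write the limiting floor as $\frac{L\eta^2}{4\mu_{\mathrm{PL}}}\cdot\frac{\sigma_\bullet^2}{c_\bullet}$. The prefactor $\frac{L\eta^2}{4\mu_{\mathrm{PL}}}$ is common and positive, so the floor is strictly monotone in the ratio $\sigma_\bullet^2/c_\bullet$. The added hypothesis $\sigma_{\mathrm{BC}}^2/c_{\mathrm{BC}}<\sigma_{\mathrm{std}}^2/c_{\mathrm{std}}$ then transfers directly to a strictly smaller floor. Letting $T\to\infty$ kills the geometric term because the factor lies in $(0,1)$, which identifies this quantity as the limsup bound.

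There is no real obstacle here. The only point needing care is interpretive: the comparison is between upper bounds from Theorem~\ref{thm:pl}, not between actual losses, and I would state this explicitly. It would also help to note when $c_{\mathrm{BC}}>c_{\mathrm{std}}$ holds. At a common $\eta$, $c_{\mathrm{bias}}$ is strictly decreasing in $\rho_{\mathrm{bias}}$, since $\mu_H(1-\rho)$ decreases and $(M_H+\rho\mu_H)^2$ increases. Hence $\rho_{\mathrm{bias},\mathrm{BC}}<\rho_{\mathrm{bias},\mathrm{std}}$, as suggested by Theorem~\ref{thm:bias-reduction}, suffices for the first conclusion.
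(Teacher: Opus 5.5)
Your proposal is correct and matches the paper's argument. The paper gives no separate proof because the corollary is read off directly from the bound in Theorem~\ref{thm:pl}, via the strict monotonicity of $c\mapsto 1-2\mu_{\mathrm{PL}}c$ and of the floor $\frac{L\eta^2}{4\mu_{\mathrm{PL}}}\cdot\frac{\sigma^2}{c}$ in $\sigma^2/c$, with $L$, $\eta$, $\mu_{\mathrm{PL}}$ shared. Your extra remark is also correct and useful: at fixed $\eta$, $c_{\mathrm{bias}}$ is strictly decreasing in $\rho_{\mathrm{bias}}$, so $\rho_{\mathrm{bias},\mathrm{BC}}<\rho_{\mathrm{bias},\mathrm{std}}$ suffices for the first conclusion, provided $\mu_H$ and $M_H$ are common to both methods.
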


The direct theoretical conclusion is therefore not that the corrected optimizer always reaches a better global optimum. Rather, under the stated assumptions, bias correction reduces the targeted stochastic-update bias components; this improves the convergence constant and the limiting optimization neighborhood whenever the variance cost of correction is not larger than the bias-reduction benefit. This unified statement applies at the level of the preconditioned update and therefore covers AdamW, Sophia, and Shampoo through the corresponding choices of $H_t$ when the implemented update satisfies the bounded-bias and bounded-variance conditions.

\section{Experiments}
\label{sec:experiments}

We evaluate the correction in two regimes (Table~\ref{tab:experiment-setup}). First, we run language-model pretraining from random initialization using the Qwen2.5-0.5B architecture on packed FineWeb-Edu sequences --- a longer and noisier setting, and therefore the more direct stress test for finite-sample preconditioner bias. We also run a mixed-quality pretraining diagnostic in which 20\% of the training sequences are corrupted by span replacement while the held-out evaluation set remains clean. Second, we run instruction tuning from a pretrained Qwen2.5-0.5B checkpoint on Alpaca-style supervised data --- a short-horizon, relatively stable post-training setting. Unless otherwise stated, reported values are held-out cross-entropy losses and lower is better. The primary experimental axes are the optimizer family, the bias-correction variant, and the training regime. We instantiate the method for AdamW, Sophia-G, and Shampoo. For each family, the main variants are standard same-batch preconditioning, cross-fit only, inverse correction only, and full BC. The main component ablation is reported in Appendix~\ref{app:bias-correction-ablations}. Additional SFT studies sweep learning rate, batch size, cross-fit mixing, pre-EMA versus post-EMA inverse correction, warm-starting from standard training, Shampoo layer coverage, clipping, preconditioner recomputation frequency, and the number of denominator microbatches; these sweep results are reported in Appendix~\ref{app:additional-experiments}.

\begin{table}[t]
\centering
\footnotesize
\setlength{\tabcolsep}{2pt}
\caption{Experimental regimes used in this study.}
\label{tab:experiment-setup}
\begin{tabular}{p{0.17\linewidth}p{0.22\linewidth}p{0.27\linewidth}p{0.23\linewidth}}
\toprule
\rowcolor{headergray}
\textbf{Regime} & \textbf{Initialization} & \textbf{Training Data} & \textbf{Evaluation} \\
\midrule
Clean pretraining & Qwen2.5-0.5B config, random init & 256K packed FineWeb-Edu sequences & 10K packed sequences, 10.24M tokens \\
Mixed-quality pretraining & Qwen2.5-0.5B config, random init & 80\% clean + 20\% span-replaced train sequences & Same clean 10K packed evaluation set \\
Instruction tuning & Qwen2.5-0.5B pretrained & 32K supervised examples & 500 examples; selected checkpoints also 5K \\
\bottomrule
\end{tabular}
\end{table}

For pretraining, FineWeb-Edu text is tokenized with the Qwen tokenizer and packed into fixed 1024-token sequences; the pretraining experiments use 256K packed training sequences and evaluate on 10K packed sequences. In the mixed-quality diagnostic, 51,200 of the 256,000 training sequences are selected for corruption. For each selected sequence, approximately 30\% of its 64-token blocks are replaced by spans sampled from other training sequences, preserving token-level plausibility while breaking semantic coherence; the evaluation set is copied unchanged. For instruction tuning, we shuffle the Alpaca-style supervised dataset with a fixed seed and reserve held-out examples before selecting the 32K-example training subset. The main SFT evaluation uses 500 held-out examples, and selected saved checkpoints are also evaluated on a larger 5K-example held-out set. Table~\ref{tab:training-config} gives the optimizer hyperparameters, microbatch structure, learning-rate schedule, random seeds, and compute resources. Learning rates and implementation variants were chosen through the targeted sweeps reported in Appendix~\ref{app:additional-experiments}; the reported main comparisons are the settings most relevant to the claims rather than an exhaustive hyperparameter grid.

\subsection{Main pretraining results}

Table~\ref{tab:pretraining-results} reports pretraining results with final held-out evaluation. Bias correction improves the corresponding standard optimizer for all three optimizer families in these pretraining comparisons. For AdamW, the main corrected run uses the LOO+Jensen variant described in Section~\ref{sec:algorithm}: embeddings follow standard AdamW, while dense parameters use leave-one-out cross-fitting over 64 microbatches and the Jensen inverse correction. On clean FineWeb-Edu pretraining, this reduces eval loss from 4.836 to 4.687, a reduction of 0.149 nats. In the mixed-quality setting, AdamW LOO+Jensen reduces clean eval loss from 4.8225 to 4.8034. This second gap is smaller (0.0191 nats) and should be read as a directional diagnostic rather than a large effect; notably, the corrected run has higher final training loss, consistent with fitting the corrupted spans less aggressively while retaining a lower clean held-out loss. Sophia full BC reduces eval loss from 6.6647 to 6.5946, and Shampoo full BC reduces eval loss from 5.7916 to 5.6813.

\begin{table}[t]
\centering
\footnotesize
\setlength{\tabcolsep}{2pt}
\caption{Pretraining results from random initialization on packed FineWeb-Edu sequences. AdamW LOO+Jensen uses 64 leave-one-out folds of 8 examples each; embeddings use standard AdamW and dense parameters use LOO bias correction. The noisy-train rows use the span-replacement training set and the same clean evaluation set.}
\label{tab:pretraining-results}
\resizebox{\textwidth}{!}{%
\begin{tabular}{p{0.10\linewidth}p{0.15\linewidth}p{0.18\linewidth}p{0.12\linewidth}p{0.13\linewidth}ccc}
\toprule
\rowcolor{headergray}
\textbf{Optimizer} & \textbf{Train Data} & \textbf{Variant} & \textbf{Batch / Split} & \textbf{LR} & \textbf{Final-50 Train} & \textbf{Eval Loss} & \textbf{Delta} \\
\midrule
\multicolumn{8}{l}{\textit{AdamW}} \\
AdamW & Clean & Standard & b512 & $6\mathrm{e}{-4}$ & 4.832 & 4.836 &  \\
AdamW & Clean & LOO+Jensen BC & b512, LOO-64 & embed $6\mathrm{e}{-4}$; dense $9\mathrm{e}{-4}$ & \textbf{4.733} & \textbf{4.687} & \textbf{-0.149} \\
AdamW & Noisy & Standard & b512 & $6\mathrm{e}{-4}$ & \textbf{4.853} & 4.823 &  \\
AdamW & Noisy & LOO+Jensen BC & b512, LOO-64 & embed $6\mathrm{e}{-4}$; dense $9\mathrm{e}{-4}$ & 4.872 & \textbf{4.803} & \textbf{-0.019} \\
\addlinespace[2pt]
\multicolumn{8}{l}{\textit{Sophia-G}} \\
Sophia & Clean & Standard & b512 & $6\mathrm{e}{-4}$ & 6.663 & 6.665 &  \\
Sophia & Clean & Full BC & A512/B512 & $6\mathrm{e}{-4}$ & \textbf{6.592} & \textbf{6.595} & \textbf{-0.070} \\
\addlinespace[2pt]
\multicolumn{8}{l}{\textit{Shampoo}} \\
Shampoo & Clean & Standard & b512 & $6\mathrm{e}{-4}$ & 5.787 & 5.792 &  \\
Shampoo & Clean & Full BC & A512/B512 & $1\mathrm{e}{-3}$ & \textbf{5.676} & \textbf{5.681} & \textbf{-0.110} \\
\bottomrule
\end{tabular}
}
\end{table}

The AdamW result illustrates why the batch split matters. Two-fold AdamW full-BC uses only half the batch for the denominator estimate, which inflates the square-of-mean denominator noise and makes updates too conservative. The LOO version keeps the denominator scale close to standard AdamW because each fold's denominator uses 504 of the 512 examples. It therefore removes same-microbatch coupling without paying the same small-denominator penalty.

\begin{figure}[!t]
\centering
\includegraphics[width=0.88\linewidth]{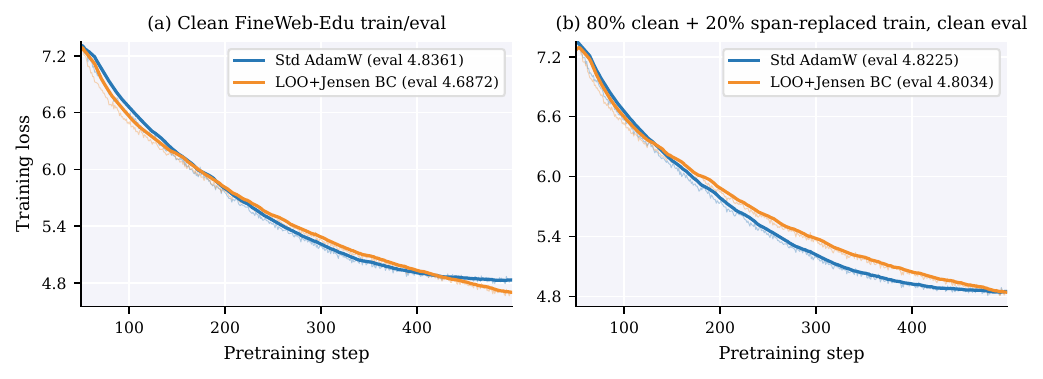}
\caption{AdamW pretraining comparisons for Qwen2.5-0.5B trained from random initialization. Left: clean FineWeb-Edu training and clean held-out evaluation. Right: mixed-quality training with 20\% span-replaced sequences and the same clean held-out evaluation. Thin lines show per-step losses and thick lines show a trailing moving average. LOO+Jensen BC improves clean held-out eval in both settings; in the mixed-quality setting it does so despite higher final training loss, consistent with less fitting of corrupted spans.}
\label{fig:adamw-loo-pretrain}
\end{figure}

\begin{table}[!t]
\centering
\footnotesize
\setlength{\tabcolsep}{2pt}
\caption{Best instruction-tuning comparisons. Values are held-out cross-entropy losses. Deltas that round to $0.000$ indicate differences below $5\times 10^{-4}$.}
\label{tab:sft-main}
\begin{tabular}{p{0.11\linewidth}p{0.20\linewidth}p{0.20\linewidth}cccc}
\toprule
\rowcolor{headergray}
\textbf{Optimizer} & \textbf{Standard Setting} & \textbf{Corrected Setting} & \textbf{Eval Set} & \textbf{Std} & \textbf{Full BC} & \textbf{Delta} \\
\midrule
AdamW & b512, lr $2\mathrm{e}{-5}$ & b512, lr $1\mathrm{e}{-4}$ & 500 & 1.347 & \textbf{1.346} & \textbf{-0.001} \\
\addlinespace[1pt]
Sophia & b512, lr $2\mathrm{e}{-5}$ & b512, lr $2\mathrm{e}{-5}$ & 5K & 1.342 & \textbf{1.342} & 0.000 \\
\addlinespace[1pt]
Shampoo & attn only, b512, lr $2\mathrm{e}{-5}$ & attn only, b512, lr $2\mathrm{e}{-5}$ & 500 & 1.347 & \textbf{1.347} & 0.000 \\
\bottomrule
\end{tabular}
\end{table}

\begin{figure}[!t]
\centering
\includegraphics[width=0.72\linewidth]{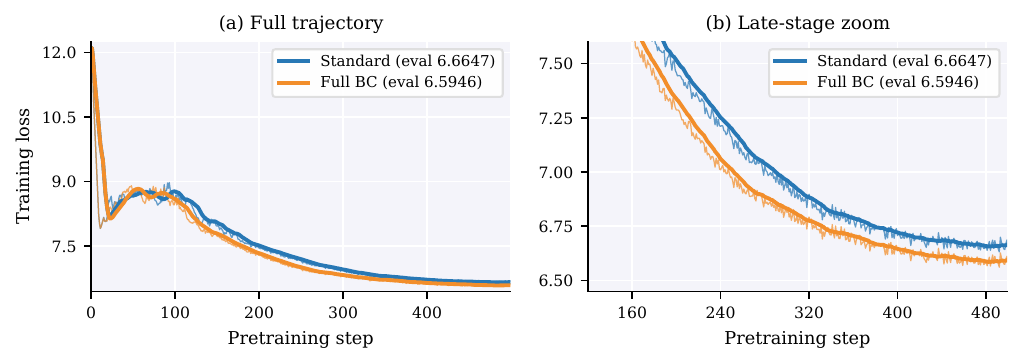}
\caption{Sophia pretraining loss curves for Qwen2.5-0.5B trained from random initialization. Thin lines show per-step losses and thick lines show a trailing moving average. Full BC reaches both lower final-50 training loss and lower held-out eval loss than the standard optimizer in this pretraining comparison.}
\label{fig:sophia-pretrain-curve}
\end{figure}

\begin{figure}[!t]
\centering
\includegraphics[width=0.72\linewidth]{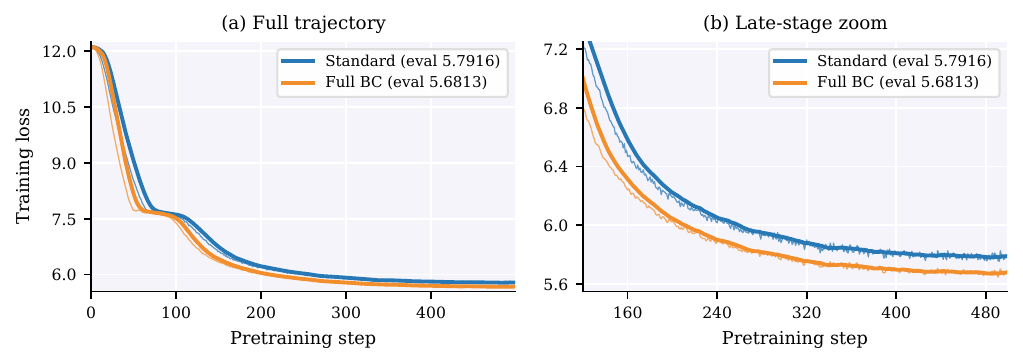}
\caption{Shampoo pretraining loss curves for Qwen2.5-0.5B trained from random initialization with attention and MLP layers using Shampoo preconditioning. Thin lines show per-step losses and thick lines show a trailing moving average. The full-BC run reaches lower final-50 training loss and lower held-out eval loss than the standard Shampoo run.}
\label{fig:shampoo-pretrain-curve}
\end{figure}

\subsection{Main instruction-tuning results}

Table~\ref{tab:sft-main} summarizes the strongest instruction-tuning comparisons. The differences are small, as expected for a short SFT run starting from a strong pretrained model. These results should be read as targeted evidence rather than an exhaustive grid: AdamW benefits most after retuning and pre-EMA inverse correction, Sophia is nearly tied on 500 examples but slightly better on the larger 5K evaluation, and Shampoo is effectively neutral in the attention-only SFT setting.

\section{Limitations}
\label{sec:limitations}

This study evaluates one model scale, Qwen2.5-0.5B, in two training regimes: FineWeb-Edu pretraining from random initialization and supervised instruction tuning on an Alpaca-style dataset. The pretraining results are positive for AdamW, Sophia, and Shampoo, and the instruction-tuning results are comparable to standard optimization with small gains in selected AdamW and Sophia settings. However, the conclusions should be checked on larger models, longer training runs, and additional pretraining and post-training datasets before treating the effect size as general.

The small SFT deltas in Tables~\ref{tab:sft-main} and~\ref{tab:four-way-ablation} are near the plausible noise floor on the 500-example evaluation set. The pretraining gaps are larger, but they are still point estimates rather than seed-averaged effects. Compute is the main constraint on statistical significance and ablation completeness: the experiments were run under a single-GPU compute budget, and each full pretraining comparison takes several hours. We therefore prioritized optimizer-family coverage, implementation checks, and compute-matched comparisons over repeated seeds, full pretraining ablation grids, and exhaustive non-AdamW SFT ablations. The resulting point estimates identify promising regimes and failure modes, but they should not be read as seed-averaged effect-size estimates.

The correction also has implementation-specific limitations. The Shampoo correction uses an eigenbasis approximation: it operates in the eigenbasis of the averaged preconditioner $\bar P_B$ rather than an eigenbasis that jointly diagonalizes the microbatch variability, so the scalar inverse-root correction is exact only along aligned eigendirections. The AdamW results also show that the batch-splitting design matters: a two-fold denominator estimate can be too noisy, while the LOO construction keeps the denominator close to full-batch scale.

\section{Conclusion}
\label{sec:conclusion}

We identify two statistical biases in stochastic preconditioned optimization: gradient--preconditioner coupling bias and inverse-preconditioner finite-sample bias. We propose a single-batch correction framework that uses cross-fitting to remove the first and microbatch variance correction to reduce the second. The method is lightweight for diagonal optimizers such as AdamW and Sophia, and extensible to matrix-preconditioned optimizers such as Shampoo through jackknife or eigenbasis correction. In pretraining, the corrected variants improve over the corresponding standard optimizers for all three optimizer families. In instruction tuning, the corrected variants are competitive with standard optimization and give small gains in selected AdamW and Sophia settings, while Shampoo is effectively neutral. These results show that correcting finite-sample bias in the stochastic update can make preconditioned language-model optimization more effective. Further discussion is provided in Appendix~\ref{sec:discussion}.

\clearpage

\bibliographystyle{plainnat}
\bibliography{references}

\clearpage
\appendix

\section*{Appendix}
\label{app:appendix}

\section{Algorithm and instantiations}
\label{sec:algorithm}

Algorithm~\ref{alg:bc-step} gives the generic bias-corrected preconditioned step. This section then specializes the same template to AdamW (diagonal moments), Sophia (diagonal curvature), and Shampoo (matrix preconditioning); detailed update equations and the four-way ablation variants $u^{\mathrm{std}}, u^{\mathrm{cf}}, u^{\mathrm{inv}}, u^{\mathrm{full}}$ for each family are deferred to Appendix~\ref{app:implementation-details}.

\subsection{AdamW: diagonal moment preconditioning}

AdamW is the standard diagonal adaptive optimizer used in language model training. Ignoring momentum notation, its update, diagonal denominator, coordinate-wise corrected inverse, and corrected update are
\begin{equation}
    u_t = \frac{\hat m_t}{\sqrt{\hat v_t}+\epsilon}.
\end{equation}
\begin{equation}
    \hat p_t = \sqrt{\hat v_t}.
\end{equation}
\begin{equation}
    \widetilde p_{t,k}^{-1}
    =
    \frac{1}{\bar p_{t,k}+\epsilon}
    -
    \frac{\widehat{\Var}(\bar p_{t,k})}{(\bar p_{t,k}+\epsilon)^3}.
\end{equation}
\begin{equation}
    u_{t,k}
    =
    \widetilde p_{t,k}^{-1}\hat m_{A,t,k}.
\end{equation}
The numerator and denominator statistics can be estimated from independent microbatch groups to remove coupling. This gives the simplest and cheapest instantiation of the method.

For AdamW pretraining, the main comparison uses a leave-one-out (LOO) version of this idea rather than a two-fold split. The 512-example batch is divided into $m=64$ microbatches. For fold $r$, the numerator uses $g_r$ and the denominator uses the leave-one-out mean
\begin{equation}
    g_{-r}
    =
    \frac{m\bar g-g_r}{m-1},
    \qquad
    \bar g=\frac{1}{m}\sum_{j=1}^m g_j.
\end{equation}
Thus the denominator is estimated from $504$ of the $512$ examples, nearly matching standard AdamW's denominator noise floor while still removing same-microbatch coupling. This avoids the conservatism of a two-fold split, where each denominator is estimated from only half the batch and the squared-gradient denominator is inflated by the smaller-batch variance. We apply the Jensen inverse correction to each LOO fold and average the resulting fold updates.

\subsection{Sophia: diagonal curvature preconditioning}

Sophia-style optimizers use a stochastic diagonal curvature or Hessian proxy. In simplified notation, the update depends on
\begin{equation}
    \frac{\hat m_t}{\hat h_t+\lambda},
\end{equation}
where $\hat h_t$ is a diagonal curvature estimate. Since $\hat h_t$ is stochastic and appears in the denominator, both coupling bias and inverse-preconditioner bias can arise.

The corrected inverse curvature and corrected Sophia-style update are
\begin{equation}
    \widetilde h_{t,k}^{-1}
    =
    \frac{1}{\bar h_{t,k}+\lambda}
    -
    \frac{\widehat{\Var}(\bar h_{t,k})}{(\bar h_{t,k}+\lambda)^3}.
\end{equation}
\begin{equation}
    u_{t,k}
    =
    \widetilde h_{t,k}^{-1}\hat m_{A,t,k},
\end{equation}
with the usual clipping or stabilization used by Sophia applied after correction. This setting is important because curvature estimates are often noisier than AdamW second-moment estimates, so inverse-bias correction may have a larger effect.

\subsection{Shampoo: matrix preconditioning}

Shampoo is the matrix-preconditioned optimizer used in our evaluation. For a matrix parameter $W$, Shampoo estimates structured second-moment matrices and applies inverse-root preconditioners to the gradient. Abstractly, this can be written as
\begin{equation}
    U_t = \hat L_t^{-\alpha} \hat G_t \hat R_t^{-\alpha},
\end{equation}
where $\hat L_t$ and $\hat R_t$ are stochastic matrix preconditioners and $\alpha$ depends on the Shampoo variant.

Our correction applies to each matrix preconditioner. For a generic Shampoo preconditioner $\bar P$, we use its eigendecomposition and projected microbatch preconditioners,
\begin{equation}
    \bar P = Q\Lambda Q^\top.
\end{equation}
\begin{equation}
    \hat P_j' = Q^\top \hat P_j Q.
\end{equation}
The delta-method correction is applied to the scalar spectral map
\begin{equation}
    T(\lambda) = (\lambda+\lambda_0)^{-\alpha}.
\end{equation}
For matrix parameters in Shampoo, $\alpha=1/4$. The corrected inverse-root eigenvalue is
\begin{equation}
    \widetilde T(\bar \lambda_k)
    =
    T(\bar \lambda_k)
    -
    \frac{1}{2}T''(\bar \lambda_k)\widehat{\Var}(\bar \lambda_k).
\end{equation}
This gives a direct matrix-preconditioner version of the method while avoiding repeated eigendecompositions.

\paragraph{Wrapper interpretation.}
Algorithm~\ref{alg:bc-step} is intentionally designed as a wrapper around existing optimizers. For AdamW, $\widetilde P_B^{-1}$ is diagonal and based on second-moment denominators. For Sophia, it is diagonal and based on curvature denominators. For Shampoo, it is matrix-valued or block-matrix-valued and applied through the eigenspectrum of the preconditioner. Plain SGD is not part of the main evaluation because it has no stochastic preconditioner and therefore does not test the main bias-correction mechanism.

\section{Main experiment ablations}
\label{app:bias-correction-ablations}

Table~\ref{tab:four-way-ablation} separates the two components of the method. The four variants are: standard same-batch preconditioning; cross-fit only, which decouples numerator and preconditioner but does not correct the inverse; inverse-only, which applies the delta-method inverse correction without cross-fitting; and full BC, which combines both. The SFT ablations are primarily for AdamW, where inverse-only is neutral-to-slightly positive at the default setting and full BC needs retuning. For Sophia and Shampoo, the reported SFT full-BC comparisons are stable and close to standard; the pretraining results above are the more informative stress test.

\begin{table}[t]
\centering
\footnotesize
\setlength{\tabcolsep}{2pt}
\caption{SFT ablations separating cross-fitting and inverse correction.}
\label{tab:four-way-ablation}
\begin{tabular}{p{0.23\linewidth}cccc}
\toprule
\rowcolor{headergray}
\textbf{Setting} & \textbf{Std} & \textbf{Cross-fit only} & \textbf{Inverse only} & \textbf{Full BC} \\
\midrule
\multicolumn{5}{l}{\textit{AdamW ablation}} \\
AdamW, b512, lr $2\mathrm{e}{-5}$ & 1.3467 & 1.3507 & \textbf{1.3464} & 1.3481 \\
\addlinespace[2pt]
\multicolumn{5}{l}{\textit{Sophia ablation}} \\
Sophia, b512, lr $2\mathrm{e}{-5}$ & 1.3892 & 1.3899 & 1.3893 & \textbf{1.3892} \\
\addlinespace[2pt]
\multicolumn{5}{l}{\textit{Shampoo ablation}} \\
Shampoo, b512, lr $2\mathrm{e}{-5}$ & 1.3473 & 1.3473 & 1.3472 & \textbf{1.3470} \\
\bottomrule
\end{tabular}
\end{table}

\section{Discussion}
\label{sec:discussion}

The proposed framework does not claim that current optimizers are wrong at every step. Current methods compute the exact inverse of the minibatch preconditioner they estimate. The issue is statistical: the minibatch inverse is generally not an unbiased estimator of the population inverse, and same-batch estimation couples the gradient with the preconditioner. These effects may be small in large-batch, heavily smoothed AdamW training, but they may become important when the preconditioner is noisy, the batch is small, the curvature estimate is aggressive, or the optimizer uses matrix-valued preconditioning.

The contribution is therefore not a replacement for existing optimizers. It is a bias-correction layer that can be applied to them. The main hypothesis is that lower-bias stochastic estimates of the preconditioned update can improve stability and sample efficiency in regimes where adaptive geometry helps but is noisy. AdamW, Sophia, and Shampoo isolate three different preconditioner regimes: diagonal moment estimates, diagonal curvature estimates, and matrix-valued preconditioners. In pretraining, the reported AdamW, Sophia, and Shampoo comparisons all favor the corrected variants, with larger gaps than in SFT. In our instruction-tuning experiments, AdamW and Sophia show small gains in selected settings, while Shampoo is effectively neutral.

Interpreting cross-fitting through the numerator and denominator signals (Section~\ref{sec:method}) helps explain why the method is expected to matter more in pretraining than in short instruction tuning. During pretraining from random or weakly trained initialization, batches are noisier, curvature and second-moment estimates change rapidly, and training takes many more optimizer steps. In this regime, conservative inverse correction can prevent occasional high-variance preconditioner estimates from producing overly aggressive updates. At the same time, high momentum and preconditioner EMA values do not eliminate the effect: over a long horizon there is enough time for the separated numerator and denominator estimates to stabilize, and for the accumulated reduction in unreliable updates to improve convergence. In shorter SFT runs, the same mechanism can still improve the best-tuned result, but the gains are smaller because the model starts from a trained initialization and the optimization trajectory is shorter.

The AdamW case also shows why the goal is not to remove every form of same-batch adaptivity unconditionally. Same-batch coupling in AdamW can normalize rare or unusually large coordinates because the same minibatch that produces a large numerator also increases the second-moment denominator. A naive two-fold split removes this coupling but estimates the denominator from only half the batch, inflating the square-of-mean noise floor and making updates too conservative. The LOO construction used for the main AdamW pretraining result keeps the denominator at nearly the same scale as standard AdamW, because each fold uses 504 of the 512 examples for the denominator. This preserves the useful scale of the standard denominator while removing same-microbatch coupling and applying the Jensen inverse correction.

\paragraph{Negative findings, framed by the theory.}
Two negative or near-neutral findings are worth surfacing as part of the narrative. First, the Shampoo SFT comparisons (Table~\ref{tab:sft-main} and Appendix Table~\ref{tab:app-shampoo-sweeps}) show standard and full-BC eval losses agreeing to within $\sim 0.005$, and full BC trails standard for the attention+MLP coverage. This is consistent with the theory: short SFT from a strong pretrained initialization gives a low-noise regime in which the inverse bias being corrected is small, and the variance cost of sample splitting can dominate. Second, the two-fold AdamW full-BC pretraining run at the same nominal learning rate as standard AdamW was substantially worse (Appendix Table~\ref{tab:app-pretrain-comparisons}). This failure motivated the LOO construction: the issue was not that AdamW cannot benefit from bias correction, but that the denominator estimator must keep approximately the same noise floor as standard AdamW.

\section{Additional derivations}
\label{app:additional-derivations}

\subsection{Delta-method derivation for diagonal inverse correction}
\label{app:delta-method-inverse}

The diagonal correction in Equation~\eqref{eq:diag-correction} is a second-order approximation to the bias introduced by inverting a noisy preconditioner estimate. Consider one scalar preconditioner entry, with independent microbatch estimates and mean
\begin{equation}
    \E[p_j]=p,
    \qquad
    \Var(p_j)=\sigma^2,
\end{equation}
\begin{equation}
    \bar p = \frac{1}{m}\sum_{j=1}^m p_j.
\end{equation}
The inverse map is
\begin{equation}
    f(p) = \frac{1}{p+\lambda}.
\end{equation}
Its nonlinearity alone is enough to introduce finite-sample bias. On the domain $p+\lambda>0$,
\begin{equation}
    f''(p)=\frac{2}{(p+\lambda)^3}>0,
\end{equation}
so $f$ is convex and Jensen's inequality gives
\begin{equation}
    \E[f(\bar p)]
    \geq
    f(\E[\bar p])
    =
    f(p),
    \label{eq:jensen-inverse-bias}
\end{equation}
with strict inequality when the microbatch estimate has nonzero variance. Thus the usual inverse preconditioner is biased upward.

The variance entering the correction is the variance of the mean estimate. By independence,
\begin{equation}
    \Var(\bar p)
    =
    \Var\left(\frac{1}{m}\sum_{j=1}^m p_j\right)
    =
    \frac{1}{m^2}\sum_{j=1}^m \Var(p_j)
    =
    \frac{\sigma^2}{m}.
    \label{eq:variance-of-mean}
\end{equation}
Thus $\Var(\bar p)=O(1/m)$ and the typical fluctuation satisfies $\bar p-p=O_p(m^{-1/2})$. In practice we estimate this quantity from the same microbatches using
\begin{equation}
    \widehat{\Var}(\bar p)
    =
    \frac{1}{m(m-1)}
    \sum_{j=1}^m (p_j-\bar p)^2.
    \label{eq:appendix-variance-of-mean-estimator}
\end{equation}
The factor $m-1$ is the usual unbiased sample-variance correction, and the additional factor $m$ converts the variance of individual microbatch statistics into the variance of their mean.

The delta method approximates the bias by expanding $f$ around the true value $p$ and then taking expectations:
\begin{equation}
    f(\bar p)
    \approx
    f(p)
    +
    f'(p)(\bar p-p)
    +
    \frac{1}{2} f''(p)(\bar p-p)^2.
\end{equation}
\begin{equation}
    \E[f(\bar p)]
    \approx
    f(p)
    +
    \frac{1}{2} f''(p)\Var(\bar p),
\end{equation}
where the linear term vanishes because $\E[\bar p-p]=0$. For $f(p)=(p+\lambda)^{-1}$,
\begin{equation}
    f''(p)=\frac{2}{(p+\lambda)^3}.
\end{equation}
Therefore the leading bias is
\begin{equation}
    \frac{1}{2} f''(p)\Var(\bar p)
    =
    \frac{\Var(\bar p)}{(p+\lambda)^3}.
\end{equation}
This term is positive, consistent with Jensen's inequality in Equation~\eqref{eq:jensen-inverse-bias}. Since $\Var(\bar p)=O(1/m)$, the uncorrected inverse has the leading bias
\begin{equation}
    \mathrm{Bias}_{\mathrm{uncorrected}}
    =
    \frac{\Var(\bar p)}{(p+\lambda)^3}
    +
    \text{higher-order terms}.
\end{equation}
The practical correction subtracts an estimate of this leading term,
\begin{equation}
    \widetilde T(\bar p)
    =
    \frac{1}{\bar p+\lambda}
    -
    \frac{\widehat{\Var}(\bar p)}{(\bar p+\lambda)^3}.
\end{equation}
The first term is the usual inverse preconditioner, and the second term estimates the upward bias caused by microbatch noise. Larger microbatch variance produces a larger correction, while stronger damping through $\lambda$ reduces the correction because the inverse map becomes less curved.

The hat on $\widehat{\mathrm{bias}}$ is important because the true leading bias is a population quantity,
\begin{equation}
    \mathrm{bias}(p)
    \approx
    \frac{\Var(\bar p)}{(p+\lambda)^3}.
\end{equation}
It depends on the unknown population preconditioner entry $p$, so it cannot be subtracted directly. The practical correction instead subtracts the plug-in estimate
\begin{equation}
    \widehat{\mathrm{bias}}
    =
    \frac{\widehat{\Var}(\bar p)}{(\bar p+\lambda)^3}.
\end{equation}
Thus the approximation is replacing the unknown curvature factor $(p+\lambda)^{-3}$ by the observable $(\bar p+\lambda)^{-3}$, and replacing the unknown variance of the mean by its microbatch estimate. This does not assume $p=\bar p$; it uses $\bar p$ as a consistent estimator of $p$ inside a second-order bias approximation.

The additional error from this plug-in replacement is smaller than the bias being corrected. To see this, first ignore the variance-estimation error and focus only on replacing $p$ by $\bar p$. Define
\begin{equation}
    g(x) = \frac{1}{(x+\lambda)^3}.
\end{equation}
A first-order expansion around $p$ gives
\begin{equation}
    g(\bar p)
    =
    g(p)
    +
    g'(p)(\bar p-p)
    +
    O((\bar p-p)^2),
    \qquad
    g'(p) = -\frac{3}{(p+\lambda)^4}.
\end{equation}
Substituting this derivative gives
\begin{equation}
    g(\bar p)
    =
    \frac{1}{(p+\lambda)^3}
    -
    \frac{3(\bar p-p)}{(p+\lambda)^4}
    +
    O((\bar p-p)^2).
\end{equation}
After multiplication by $\Var(\bar p)$,
\begin{equation}
    \Var(\bar p)g(\bar p)
    =
    \frac{\Var(\bar p)}{(p+\lambda)^3}
    -
    \frac{3\Var(\bar p)(\bar p-p)}{(p+\lambda)^4}
    +
    O(\Var(\bar p)(\bar p-p)^2).
\end{equation}
Since $\Var(\bar p)=O(1/m)$ and $\bar p-p=O_p(m^{-1/2})$, the leading term is $O(1/m)$, which is the bias we want to correct. The next term is $O_p(m^{-3/2})$, and the quadratic remainder contributes $O_p(m^{-2})$. Thus the error introduced by plugging in $\bar p$ shrinks faster with $m$ than the dominant inverse bias itself. The same logic applies to the variance estimate in Equation~\eqref{eq:appendix-variance-of-mean-estimator}: it estimates the variance of the random mean $\bar p$, not the degenerate quantity obtained after conditioning on the observed value. Therefore the variance does not become zero when $p$ is replaced by $\bar p$; the microbatch fluctuations around $\bar p$ are precisely the empirical signal used to estimate $\Var(\bar p)$.

Putting the pieces together, the corrected estimator can be written as
\begin{equation}
    f(\bar p)
    -
    \widehat{\mathrm{bias}},
    \qquad
    \widehat{\mathrm{bias}}
    =
    \frac{\widehat{\Var}(\bar p)}{(\bar p+\lambda)^3}.
\end{equation}
Taking expectations separates the true leading bias from the estimated subtraction,
\begin{equation}
    \E[f(\bar p)-\widehat{\mathrm{bias}}]
    =
    f(p)
    +
    O(m^{-3/2}).
\end{equation}
The $O(1/m)$ terms cancel. What remains is the approximation error in the bias estimate, which is smaller than the leading inverse bias being corrected:
\begin{equation}
    \mathrm{Bias}_{\mathrm{corrected}}
    =
    O(m^{-3/2}),
\end{equation}
or $O(m^{-2})$ under stronger moment and smoothness conditions. Thus the correction removes the dominant finite-sample inverse bias and leaves only smaller-order residual terms. In diagonal or vector preconditioners, this correction is applied elementwise. For matrix preconditioners, the same principle can be applied to eigenvalues in the current eigenspace, since the nonlinearity enters through inverse or inverse-root transformations of the spectrum.
\subsection{Inverse-root correction for matrix preconditioners}
\label{app:inverse-root-proof}

For Shampoo-style inverse-root preconditioning, the relevant scalar functional is the spectral map
\begin{equation}
    T(\lambda) = (\lambda+\lambda_0)^{-\alpha},
    \qquad
    \alpha\in(0,1],
\end{equation}
applied to the eigenvalues of an averaged matrix preconditioner $\bar P$ with eigendecomposition $\bar P=Q\Lambda Q^\top$. For microbatch preconditioner estimates $\bar P_1,\ldots,\bar P_m$, define their projections into the eigenbasis of $\bar P$ by
\begin{equation}
    \widetilde P_j = Q^\top \bar P_j Q
\end{equation}
and let the projected microbatch eigenvalue estimates be the diagonal entries $\bar\lambda_{j,k}=[\widetilde P_j]_{kk}$, with mean
\begin{equation}
    \bar\lambda_k
    =
    \frac{1}{m}\sum_{j=1}^m \bar\lambda_{j,k}.
\end{equation}
Note that $\bar\lambda_k$ is the $k$th eigenvalue of $\bar P$, not of the population $P$; the proof must treat this distinction carefully.

\paragraph{Bias of the inverse-root estimate.}
Fix coordinate $k$ and write $\lambda$ for the population eigenvalue along the corresponding eigen-direction. Under bounded fourth-order microbatch moments and three-times continuous differentiability of $T$, a Taylor expansion of $T$ around $\lambda$ gives
\begin{equation}
    \E[T(\bar\lambda_k)]
    =
    T(\lambda)
    +
    \tfrac{1}{2}T''(\lambda)\Var(\bar\lambda_k)
    +
    O(m^{-3/2}),
\end{equation}
where $\Var(\bar\lambda_k)=O(1/m)$ by independence of the microbatches. For $T(\lambda)=(\lambda+\lambda_0)^{-\alpha}$, the second derivative is
\begin{equation}
    T''(\lambda)
    =
    \alpha(\alpha+1)(\lambda+\lambda_0)^{-\alpha-2}
    >0,
\end{equation}
so $T$ is convex on $\lambda+\lambda_0>0$ and the leading bias term is positive --- the uncorrected inverse-root systematically overestimates $T(\lambda)$.

\paragraph{Plug-in correction.}
Subtracting the plug-in estimate of the leading bias gives the corrected inverse-root eigenvalue
\begin{equation}
    \widetilde T(\bar\lambda_k)
    =
    T(\bar\lambda_k)
    -
    \tfrac{1}{2}T''(\bar\lambda_k)\widehat{\Var}(\bar\lambda_k).
    \label{eq:invroot-correction}
\end{equation}
The same plug-in argument as in Appendix~\ref{app:delta-method-inverse} applies coordinatewise: replacing $\lambda$ by $\bar\lambda_k$ in the curvature factor introduces an $O_p(m^{-3/2})$ error, which is smaller than the $O(1/m)$ bias being corrected. Therefore,
\begin{equation}
    \E[\widetilde T(\bar\lambda_k)]
    =
    T(\lambda)
    +
    O(m^{-3/2}),
\end{equation}
so the corrected inverse-root eigenvalue has residual bias of order $m^{-3/2}$ rather than $m^{-1}$.

\paragraph{Eigenbasis approximation.}
A caveat: the correction is computed in the eigenbasis of $\bar P$, not in the eigenbasis of the population $P$. Under bounded eigen-gap and bounded matrix-perturbation moments, classical eigenvalue perturbation gives $\bar Q-Q^\star=O_p(m^{-1/2})$ in operator norm, where $Q^\star$ is the population eigenbasis. The correction is therefore exact for the scalar inverse-root bias along directions in which $\bar P$ and $P$ share an eigen-direction. When the eigenspaces differ, the coordinatewise correction in the empirical eigenbasis leaves additional basis-rotation terms; these are controlled by the same perturbation assumptions but are not canceled by the scalar delta-method correction. This is the approximation used for Shampoo in our implementation.

\subsection{Convergence proofs}
\label{app:convergence-proofs}

This subsection contains the deferred proofs of Theorems~\ref{thm:nonconvex} and~\ref{thm:pl}.

\begin{proof}[Proof of Theorem~\ref{thm:nonconvex}]
The proof starts from the standard descent inequality implied by $L$-smoothness:
\begin{equation}
    f(\theta_{t+1})
    \le
    f(\theta_t)
    -
    \eta\langle \nabla f(\theta_t),\hat u_t\rangle
    +
    \frac{L\eta^2}{2}\|\hat u_t\|^2.
\end{equation}
Taking conditional expectation, decompose the mean stochastic update as
\begin{equation}
    \E[\hat u_t\mid\mathcal F_t]
    =
    H_t\nabla f(\theta_t)+b_t,
\end{equation}
where $b_t$ is the conditional update bias. The lower spectral bound on $H_t$ and the relative-bias assumption give
\begin{equation}
    \langle \nabla f,H_t\nabla f\rangle
    \ge
    \mu_H\|\nabla f\|^2
\end{equation}
\begin{equation}
    |\langle \nabla f,b_t\rangle|
    \le
    \|b_t\|\|\nabla f\|
    \le
    \rho_{\mathrm{bias}}\mu_H\|\nabla f\|^2.
\end{equation}
Combining the last two displays yields the descent-direction lower bound
\begin{equation}
    \langle \nabla f,\E[\hat u_t\mid\mathcal F_t]\rangle
    \ge
    \mu_H(1-\rho_{\mathrm{bias}})\|\nabla f\|^2.
\end{equation}
The corresponding second-moment bound follows from the upper spectral bound, the same relative-bias condition, and the variance assumption:
\begin{equation}
    \|\E[\hat u_t\mid\mathcal F_t]\|
    \le
    (M_H+\rho_{\mathrm{bias}}\mu_H)\|\nabla f\|,
\end{equation}
\begin{equation}
    \E[\|\hat u_t\|^2\mid\mathcal F_t]
    \le
    (M_H+\rho_{\mathrm{bias}}\mu_H)^2\|\nabla f\|^2+\sigma^2.
\end{equation}
Substituting these two estimates into the smoothness inequality gives the one-step recursion
\begin{equation}
    \E[f(\theta_{t+1})\mid\mathcal F_t]
    \le
    f(\theta_t)
    -
    c_{\mathrm{bias}}\|\nabla f(\theta_t)\|^2
    +
    \frac{L\eta^2}{2}\sigma^2,
\end{equation}
with coefficient
\begin{equation}
    c_{\mathrm{bias}}
    =
    \eta\mu_H(1-\rho_{\mathrm{bias}})
    -
    \frac{L\eta^2}{2}(M_H+\rho_{\mathrm{bias}}\mu_H)^2.
\end{equation}
The stated step-size condition ensures that this coefficient is positive and satisfies
\begin{equation}
    c_{\mathrm{bias}}\ge \frac{1}{2}\eta\mu_H(1-\rho_{\mathrm{bias}}).
\end{equation}
Summing over $t=0,\ldots,T-1$, using $f(\theta_T)\ge f_\star$, and rearranging gives the claim.
\end{proof}

\begin{proof}[Proof of Theorem~\ref{thm:pl}]
The proof of Theorem~\ref{thm:nonconvex} gives the conditional descent recursion
\begin{equation}
    \E[f(\theta_{t+1})\mid\mathcal F_t]
    \le
    f(\theta_t)
    -
    c_{\mathrm{bias}}\|\nabla f(\theta_t)\|^2
    +
    \frac{L\eta^2}{2}\sigma^2.
\end{equation}
The PL condition converts gradient norm into function suboptimality:
\begin{equation}
    \|\nabla f(\theta_t)\|^2
    \ge
    2\mu_{\mathrm{PL}}(f(\theta_t)-f_\star).
\end{equation}
Substitution gives the affine contraction
\begin{equation}
    \E[f(\theta_{t+1})-f_\star\mid\mathcal F_t]
    \le
    (1-2\mu_{\mathrm{PL}}c_{\mathrm{bias}})(f(\theta_t)-f_\star)
    +
    \frac{L\eta^2}{2}\sigma^2.
\end{equation}
Unrolling this recursion gives
\begin{equation}
    \E[f(\theta_T)-f_\star]
    \le
    (1-2\mu_{\mathrm{PL}}c_{\mathrm{bias}})^T(f(\theta_0)-f_\star)
    +
    \frac{L\eta^2\sigma^2}{4\mu_{\mathrm{PL}}c_{\mathrm{bias}}}.
\end{equation}
\end{proof}

\section{Implementation details for evaluated optimizers}
\label{app:implementation-details}

This section gives the update equations used for the corrected implementations. In all cases, the parameter value is fixed at $\theta_t$ before any gradients or preconditioner statistics are computed. The batch is split into an $A$ group for the numerator direction and a $B$ group for the preconditioner. The $B$ group is further split into microbatches $B_1,\ldots,B_m$ so that microbatch variability can be used to estimate the variance of the mean preconditioner. All quantities for $A$ and $B$ are computed before modifying the parameters.

\subsection{Training configuration}
\label{app:training-config}

Table~\ref{tab:training-config} consolidates the optimizer hyperparameters, schedules, seeds and compute resources used across the pretraining and SFT runs reported in the body. Values without a sweep are reported as scalars; values that vary across cells in Tables~\ref{tab:app-adamw-controls}--\ref{tab:app-shampoo-sweeps} are reported as ranges with the swept dimension named.

\begin{table}[t]
\centering
\small
\resizebox{\textwidth}{!}{%
\begin{tabular}{@{}ll@{}}
\toprule
\rowcolor{headergray}
\textbf{Setting} & \textbf{Value} \\
\midrule
\multicolumn{2}{@{}l}{\textit{Optimization}} \\
Total optimizer steps (SFT)        & 62 for b512 main runs; 125--250 in smaller-batch sweeps \\
Total optimizer steps (pretrain)   & 500 \\
Sequence length                    & 1024 tokens \\
Numerator group size $|A|$         & 128--512 examples; main compute-matched runs use 512 \\
Denominator group size $|B|$       & 128--512 examples for CF/full BC; main runs use 512 \\
Microbatch count $m$               & 2--64 microbatches per group, depending on optimizer and memory; AdamW LOO pretraining uses 64 folds of 8 examples \\
Gradient accumulation              & Per-microbatch accumulation; one optimizer step after all microbatches \\
\midrule
\multicolumn{2}{@{}l}{\textit{AdamW}} \\
$\beta_1, \beta_2, \epsilon$       & pretrain: $(0.9,0.95,10^{-8})$; SFT: $(0.9,0.999,10^{-8})$ \\
Weight decay $\lambda_{\mathrm{wd}}$ & pretrain: 0.1; SFT: 0.01 \\
Damping $\lambda$ (when applicable) & None beyond $\epsilon=10^{-8}$ \\
\midrule
\multicolumn{2}{@{}l}{\textit{Sophia-G}} \\
Denominator scale $\rho$             & 0.05 \\
Hessian-update interval $K$         & pretrain: 5; SFT: 10 \\
$\beta_1, \beta_2, \epsilon$       & pretrain: $(0.965,0.99,10^{-12})$; SFT: $(0.965,0.99,10^{-15})$ \\
\midrule
\multicolumn{2}{@{}l}{\textit{Shampoo}} \\
Eigenbasis update cadence           & pretrain: 5 steps; SFT: 10 steps unless swept \\
Eigenvalue clip $d_{\max}$          & Disabled in reported main runs \\
Frobenius clip $c$                  & Disabled in reported main runs \\
\midrule
\multicolumn{2}{@{}l}{\textit{Schedule \& regularization}} \\
LR schedule (warmup + decay)        & Linear warmup followed by cosine decay; AdamW LOO pretraining decays to 20\% of peak LR \\
Warmup steps                        & pretrain: 20; SFT: 12 for b512 main runs, 50 in auxiliary sweeps \\
Gradient clip threshold             & Global norm 1.0; Sophia ratio clip 3.0 (pretrain) and 1.0 (SFT) \\
\midrule
\multicolumn{2}{@{}l}{\textit{Reproducibility}} \\
Random seeds                        & Model seed 42; Alpaca shuffle seed 42; data-order seed 99 for compute-matched runs \\
Tokenizer                           & Qwen/Qwen2.5-0.5B tokenizer \\
HuggingFace model revision          & Qwen/Qwen2.5-0.5B default revision \\
\midrule
\multicolumn{2}{@{}l}{\textit{Hardware \& compute}} \\
GPU type                            & NVIDIA A100-SXM4-80GB \\
GPU count                           & 1 \\
Wall-clock per SFT run              & Approximately 11--21 minutes for b512 main runs \\
Wall-clock per pretraining run      & Approximately 2.8--6.7 hours for 500-step runs \\
Total measured compute              & At least 74 A100-hours from timestamped run logs \\
\bottomrule
\end{tabular}
}
\caption{\textbf{Training configuration.} Single source of truth for the optimizer hyperparameters, schedules, random seeds, hardware, and wall-clock compute used in the runs reported in Tables~\ref{tab:app-adamw-controls}--\ref{tab:app-shampoo-sweeps} and the pretraining results in Section~\ref{sec:experiments}.}
\label{tab:training-config}
\end{table}

\subsection{Corrected AdamW}
\label{app:adamw-implementation}

AdamW is implemented with decoupled weight decay, first- and second-moment exponential moving averages, bias correction, and then the adaptive step. At step $t$, the numerator and denominator groups give the gradients
\begin{equation}
    g_A
    =
    \frac{1}{|A|}\sum_{z_i\in A}\nabla_\theta \ell(\theta_t;z_i),
    \qquad
    g_{B_j}
    =
    \frac{1}{|B_j|}\sum_{z_i\in B_j}\nabla_\theta \ell(\theta_t;z_i).
\end{equation}
The first moment uses only the numerator group:
\begin{equation}
    m_t
    =
    \beta_1 m_{t-1}+(1-\beta_1)g_A.
\end{equation}
The second moment uses only the independent denominator group:
\begin{equation}
    s_B
    =
    \frac{1}{m}\sum_{j=1}^m g_{B_j}^{\odot 2},
\end{equation}
\begin{equation}
    v_t
    =
    \beta_2 v_{t-1}+(1-\beta_2)s_B.
\end{equation}
Adam bias correction then gives
\begin{equation}
    \hat m_t=\frac{m_t}{1-\beta_1^t},
    \qquad
    \hat v_t=\frac{v_t}{1-\beta_2^t},
    \qquad
    p_t=\sqrt{\hat v_t}.
\end{equation}
For inverse-bias correction, we treat the square-root denominator $p=\sqrt{\hat v}$ as the scalar preconditioning quantity and define the hypothetical denominator induced by each $B$ microbatch:
\begin{equation}
    v_{t,j}
    =
    \beta_2 v_{t-1}+(1-\beta_2)g_{B_j}^{\odot 2},
    \qquad
    \hat v_{t,j}=\frac{v_{t,j}}{1-\beta_2^t},
    \qquad
    p_{t,j}=\sqrt{\hat v_{t,j}}.
\end{equation}
The variance estimator for the mean denominator is
\begin{equation}
    \bar p_t
    =
    \frac{1}{m}\sum_{j=1}^m p_{t,j},
    \qquad
    \widehat{\Var}(\bar p_t)
    =
    \frac{1}{m(m-1)}\sum_{j=1}^m (p_{t,j}-\bar p_t)^{\odot 2}.
\end{equation}
The corrected inverse denominator and corrected AdamW direction are
\begin{equation}
    \widetilde p_t^{-1}
    =
    \frac{1}{\bar p_t+\epsilon}
    -
    \frac{\widehat{\Var}(\bar p_t)}{(\bar p_t+\epsilon)^{\odot 3}},
    \qquad
    \widetilde p_t^{-1}\leftarrow \max\{\widetilde p_t^{-1},0\}.
\end{equation}
\begin{equation}
    u_t
    =
    \hat m_t\odot \widetilde p_t^{-1}.
\end{equation}
In experiments, the same final update clipping and decoupled weight decay are applied to standard and corrected AdamW:
\begin{equation}
    u_t
    \leftarrow
    u_t\cdot \min\left\{1,\frac{c}{\|u_t\|_2}\right\}.
\end{equation}
\begin{equation}
    \theta_t^{\mathrm{wd}}
    =
    \theta_t-\eta\lambda_{\mathrm{wd}}\theta_t,
    \qquad
    \theta_{t+1}
    =
    \theta_t^{\mathrm{wd}}-\eta u_t.
\end{equation}
The four AdamW ablations are
\begin{align}
    u_t^{\mathrm{std}}
    &=
    \frac{\hat m_t}{\sqrt{\hat v_t}+\epsilon}, \\
    u_t^{\mathrm{cf}}
    &=
    \frac{\hat m_{A,t}}{\sqrt{\hat v_{B,t}}+\epsilon}, \\
    u_t^{\mathrm{inv}}
    &=
    \hat m_t\odot \widetilde p_t^{-1}, \\
    u_t^{\mathrm{full}}
    &=
    \hat m_{A,t}\odot \widetilde p_{B,t}^{-1}.
\end{align}

The AdamW pretraining comparison in Table~\ref{tab:pretraining-results} uses a LOO variant to avoid the half-batch denominator penalty of a two-fold split. Let the 512-example batch be partitioned into $m=64$ microbatches, with gradient means $g_1,\ldots,g_m$ and full-batch mean $\bar g=m^{-1}\sum_j g_j$. For fold $r$,
\begin{equation}
    g_{-r}
    =
    \frac{m\bar g-g_r}{m-1},
    \qquad
    s_{-r}=g_{-r}^{\odot 2}.
\end{equation}
The dense-parameter fold update uses $g_r$ in the numerator and $s_{-r}$ in the denominator:
\begin{equation}
    u_r
    =
    \frac{\hat m_r}{\sqrt{\hat v_{-r}}+\epsilon},
    \qquad
    u_{\mathrm{LOO}}
    =
    \frac{1}{m}\sum_{r=1}^m u_r.
\end{equation}
With the Jensen correction enabled, the implementation estimates the sample variance across LOO denominators $p_r=\sqrt{\hat v_{-r}}+\epsilon$,
\begin{equation}
    \bar p_{\mathrm{LOO}}
    =
    \frac{1}{m}\sum_{r=1}^m p_r,
    \qquad
    \widehat{\Var}_{\mathrm{fold}}(p_r)
    =
    \frac{1}{m-1}\sum_{r=1}^m
    (p_r-\bar p_{\mathrm{LOO}})^{\odot 2}.
\end{equation}
The corresponding inverse-bias term is subtracted before averaging:
\begin{equation}
    u_{\mathrm{LOO+Jensen}}
    =
    \frac{1}{m}\sum_{r=1}^m
    \left(
    \frac{\hat m_r}{p_r}
    -
    \widehat{\Var}_{\mathrm{fold}}(p_r)
    \frac{\hat m_r}{p_r^{\odot 3}}
    \right),
\end{equation}
with a sign-preserving clamp when the correction would reverse a coordinate. Embedding parameters remain on the standard AdamW path in this hybrid implementation, while dense attention, MLP, normalization, and bias parameters use the LOO+Jensen update.

\subsection{Corrected Sophia-G}
\label{app:sophia-implementation}

For Sophia, the implementation target is Sophia-G. The optimizer maintains a first-moment EMA $m_t$ and a diagonal Hessian or Gauss--Newton--Bartlett estimate $h_t$. The numerator gradient and first moment are computed from $A$:
\begin{equation}
    g_A
    =
    \frac{1}{|A|}\sum_{z_i\in A}\nabla_\theta \ell(\theta_t;z_i),
    \qquad
    m_t
    =
    \beta_1m_{t-1}+(1-\beta_1)g_A.
\end{equation}
Each denominator microbatch $B_j$ gives a Sophia-G diagonal curvature estimate
\begin{equation}
    r_{B_j}=g_{\mathrm{GNB},B_j}^{\odot 2},
\end{equation}
where $g_{\mathrm{GNB},B_j}$ is the gradient of the sampled-label loss used by the Gauss--Newton--Bartlett estimator. The independent Hessian estimates are averaged and then used to update the Hessian EMA:
\begin{equation}
    r_B=\frac{1}{m}\sum_{j=1}^m r_{B_j},
\end{equation}
\begin{equation}
    h_t
    =
    \beta_2h_{t-1}+(1-\beta_2)r_B.
\end{equation}
The denominator is
\begin{equation}
    p_t
    =
    \rho\, b\, h_t+\epsilon,
\end{equation}
where $b$ is the batch-size scaling used by the implementation. If the implementation uses the paper-style parameterization without this scaling, set $b=1$.

For inverse correction, define each microbatch denominator and then estimate the variance of the mean denominator:
\begin{equation}
    h_{t,j}
    =
    \beta_2h_{t-1}+(1-\beta_2)r_{B_j},
    \qquad
    p_{t,j}
    =
    \rho\, b\, h_{t,j}+\epsilon.
\end{equation}
\begin{equation}
    \bar p_t
    =
    \frac{1}{m}\sum_{j=1}^m p_{t,j},
    \qquad
    \widehat{\Var}(\bar p_t)
    =
    \frac{1}{m(m-1)}\sum_{j=1}^m(p_{t,j}-\bar p_t)^{\odot 2}.
\end{equation}
The corrected inverse denominator, pre-clipping ratio, and clipped direction are
\begin{equation}
    \widetilde p_t^{-1}
    =
    \frac{1}{p_t}
    -
    \frac{\widehat{\Var}(\bar p_t)}{p_t^{\odot 3}},
    \qquad
    \widetilde p_t^{-1}\leftarrow \max\{\widetilde p_t^{-1},0\}.
\end{equation}
\begin{equation}
    q_t
    =
    m_t\odot \widetilde p_t^{-1},
    \qquad
    \widetilde q_t
    =
    \mathrm{clip}(q_t,-1,1).
\end{equation}
Equivalently,
\begin{equation}
    \widetilde q_t
    =
    \mathrm{sign}(m_t)\odot \min\{|m_t|\odot \widetilde p_t^{-1},1\}.
\end{equation}
Decoupled weight decay is applied before the adaptive step:
\begin{equation}
    \theta_t^{\mathrm{wd}}
    =
    (1-\eta\lambda_{\mathrm{wd}})\theta_t,
    \qquad
    \theta_{t+1}
    =
    \theta_t^{\mathrm{wd}}-\eta\widetilde q_t.
\end{equation}
If the Hessian estimate is updated only every $K$ steps, the $B$-side Hessian estimation and inverse correction are performed only on those Hessian-update steps. On other steps, $h_t=h_{t-1}$ and the most recent corrected denominator is reused, while $m_t$ continues to update every step from $A$.

The four Sophia ablations are
\begin{align}
    u_t^{\mathrm{std}}
    &=
    \mathrm{clip}\left(\frac{m_t}{\rho b h_t+\epsilon},-1,1\right), \\
    u_t^{\mathrm{cf}}
    &=
    \mathrm{clip}\left(m_{A,t}\odot\frac{1}{\rho b h_{B,t}+\epsilon},-1,1\right), \\
    u_t^{\mathrm{inv}}
    &=
    \mathrm{clip}\left(m_t\odot\widetilde p_t^{-1},-1,1\right), \\
    u_t^{\mathrm{full}}
    &=
    \mathrm{clip}\left(m_{A,t}\odot\widetilde p_{B,t}^{-1},-1,1\right).
\end{align}
The inverse correction is applied before Sophia's clipping.

\subsection{Corrected Shampoo}
\label{app:shampoo-implementation}

For Shampoo, consider a matrix parameter $W_t\in\mathbb{R}^{d_1\times d_2}$. The numerator gradient and momentum state are
\begin{equation}
    G_A
    =
    \frac{1}{|A|}\sum_{z_i\in A}\nabla_W\ell(W_t;z_i),
\end{equation}
\begin{equation}
    M_t
    =
    \beta_1M_{t-1}+(1-\beta_1)G_A.
\end{equation}
Each denominator microbatch gives a gradient and corresponding left/right Shampoo statistics:
\begin{equation}
    G_{B_j}
    =
    \frac{1}{|B_j|}\sum_{z_i\in B_j}\nabla_W\ell(W_t;z_i),
\end{equation}
\begin{equation}
    S^L_{B_j}=G_{B_j}G_{B_j}^{\top},
    \qquad
    S^R_{B_j}=G_{B_j}^{\top}G_{B_j}.
\end{equation}
Averaging over $B$ and applying the EMA convention gives
\begin{equation}
    S_B^L=\frac{1}{m}\sum_{j=1}^m S^L_{B_j},
    \qquad
    S_B^R=\frac{1}{m}\sum_{j=1}^m S^R_{B_j}.
\end{equation}
\begin{equation}
    L_t=\beta_2L_{t-1}+(1-\beta_2)S_B^L,
    \qquad
    R_t=\beta_2R_{t-1}+(1-\beta_2)S_B^R.
\end{equation}
For cumulative AdaGrad-style Shampoo, replace these equations by $L_t=L_{t-1}+S_B^L$ and $R_t=R_{t-1}+S_B^R$. The correction must use the same convention as the baseline implementation.

Damping is added before forming the inverse-root preconditioners:
\begin{equation}
    \bar L_t=L_t+\lambda I_{d_1},
    \qquad
    \bar R_t=R_t+\lambda I_{d_2}.
\end{equation}
\begin{equation}
    P_t^L=\bar L_t^{-1/4},
    \qquad
    P_t^R=\bar R_t^{-1/4}.
\end{equation}
The exponent $\alpha=1/4$ is the standard Shampoo choice for matrix parameters. The cross-fitted uncorrected update is
\begin{equation}
    U_t^{\mathrm{cf}}
    =
    P_t^L M_t P_t^R.
\end{equation}

For inverse-root correction, eigendecompose the damped averaged preconditioners:
\begin{equation}
    \bar L_t=Q_t^L\Lambda_t^L(Q_t^L)^\top,
    \qquad
    \bar R_t=Q_t^R\Lambda_t^R(Q_t^R)^\top.
\end{equation}
For each microbatch, construct hypothetical updated preconditioners,
\begin{equation}
    L_{t,j}=\beta_2L_{t-1}+(1-\beta_2)S_{B_j}^L,
    \qquad
    R_{t,j}=\beta_2R_{t-1}+(1-\beta_2)S_{B_j}^R,
\end{equation}
with the cumulative analog used if the baseline is cumulative Shampoo. After damping, these are projected into the eigenspaces of the averaged preconditioners:
\begin{equation}
    \widetilde L_{t,j}=(Q_t^L)^\top(L_{t,j}+\lambda I_{d_1})Q_t^L,
    \qquad
    \widetilde R_{t,j}=(Q_t^R)^\top(R_{t,j}+\lambda I_{d_2})Q_t^R.
\end{equation}
Define the projected diagonal coordinates and their mean-eigenvalue variance estimate by
\begin{equation}
    \ell^L_{t,j,k}=[\widetilde L_{t,j}]_{kk},
    \qquad
    \ell^R_{t,j,k}=[\widetilde R_{t,j}]_{kk}.
\end{equation}
\begin{equation}
    \widehat{\Var}(\bar\lambda^L_{t,k})
    =
    \frac{1}{m(m-1)}\sum_{j=1}^m(\ell^L_{t,j,k}-\bar\ell^L_{t,k})^2,
    \qquad
    \bar\ell^L_{t,k}=\frac{1}{m}\sum_{j=1}^m\ell^L_{t,j,k},
\end{equation}
with the analogous definition for $\widehat{\Var}(\bar\lambda^R_{t,k})$.

For $f(x)=x^{-1/4}$, $f''(x)=\frac{5}{16}x^{-9/4}$. The corrected inverse-root eigenvalues are
\begin{equation}
    \widetilde d^L_{t,k}
    =
    (\lambda^L_{t,k})^{-1/4}
    -
    \frac{5}{32}(\lambda^L_{t,k})^{-9/4}\widehat{\Var}(\bar\lambda^L_{t,k}),
\end{equation}
\begin{equation}
    \widetilde d^R_{t,k}
    =
    (\lambda^R_{t,k})^{-1/4}
    -
    \frac{5}{32}(\lambda^R_{t,k})^{-9/4}\widehat{\Var}(\bar\lambda^R_{t,k}).
\end{equation}
For stability, corrected eigenvalues are clipped to the same range used by the Shampoo baseline:
\begin{equation}
    \widetilde d^L_{t,k}\leftarrow \min\{d_{\max},\max\{0,\widetilde d^L_{t,k}\}\},
    \qquad
    \widetilde d^R_{t,k}\leftarrow \min\{d_{\max},\max\{0,\widetilde d^R_{t,k}\}\}.
\end{equation}
The corrected left and right inverse-root matrices and the full corrected Shampoo update are
\begin{equation}
    \widetilde P_t^L
    =
    Q_t^L\mathrm{diag}(\widetilde d^L_{t,1},\ldots,\widetilde d^L_{t,d_1})(Q_t^L)^\top,
\end{equation}
\begin{equation}
    \widetilde P_t^R
    =
    Q_t^R\mathrm{diag}(\widetilde d^R_{t,1},\ldots,\widetilde d^R_{t,d_2})(Q_t^R)^\top.
\end{equation}
\begin{equation}
    U_t^{\mathrm{full}}
    =
    \widetilde P_t^L M_t \widetilde P_t^R.
\end{equation}
The same final Frobenius-norm clipping and decoupled weight decay are applied to standard and corrected Shampoo:
\begin{equation}
    U_t^{\mathrm{full}}
    \leftarrow
    U_t^{\mathrm{full}}\cdot \min\left\{1,\frac{c}{\|U_t^{\mathrm{full}}\|_F}\right\}.
\end{equation}
\begin{equation}
    W_t^{\mathrm{wd}}=W_t-\eta\lambda_{\mathrm{wd}}W_t,
    \qquad
    W_{t+1}=W_t^{\mathrm{wd}}-\eta U_t^{\mathrm{full}}.
\end{equation}
The Shampoo ablations are
\begin{align}
    U_t^{\mathrm{std}}
    &=
    (L_t+\lambda I)^{-1/4}M_t(R_t+\lambda I)^{-1/4}, \\
    U_t^{\mathrm{cf}}
    &=
    P_{B,t}^L M_{A,t}P_{B,t}^R, \\
    U_t^{\mathrm{inv}}
    &=
    \widetilde P_t^L M_t\widetilde P_t^R, \\
    U_t^{\mathrm{full}}
    &=
    \widetilde P_{B,t}^L M_{A,t}\widetilde P_{B,t}^R.
\end{align}

\section{Additional experimental results}
\label{app:additional-experiments}

This appendix records the targeted sweeps used to understand the bias-correction design. These are not exhaustive grids. They are diagnostic runs chosen to isolate learning-rate sensitivity, batch size, cross-fit mixing, inverse-correction timing, clipping, preconditioner recomputation, and denominator microbatch count. Unless otherwise stated, SFT results use the 500-example held-out set and lower is better.

\subsection{AdamW SFT sweeps}

The AdamW sweeps are split by the variable being tested. Tables~\ref{tab:app-adamw-controls} and~\ref{tab:app-adamw-full-lr} vary batch size and learning rate, respectively. Table~\ref{tab:app-adamw-impl} reports inverse-correction implementation choices. The strongest full-BC AdamW SFT run uses the larger b512 compute-matched setting with pre-EMA inverse correction and lr $1\mathrm{e}{-4}$.

\begin{table}[t]
\centering
\small
\setlength{\tabcolsep}{4pt}
\renewcommand{\arraystretch}{1.1}
\caption{AdamW SFT batch size sweep at lr $5\mathrm{e}{-5}$.}
\label{tab:app-adamw-controls}
\begin{tabular}{@{}lcc@{}}
\toprule
\rowcolor{headergray}
\textbf{Batch} & \textbf{Std} & \textbf{Full BC} \\
\midrule
b128 & 1.369 & \best{1.364} \\
b256 & 1.358 & \best{1.351} \\
b512 & 1.352 & \best{1.348} \\
\bottomrule
\end{tabular}
\end{table}

\begin{table}[t]
\centering
\small
\setlength{\tabcolsep}{4pt}
\renewcommand{\arraystretch}{1.1}
\caption{AdamW SFT learning-rate sweep at b512.}
\label{tab:app-adamw-full-lr}
\begin{tabular}{@{}lcc@{}}
\toprule
\rowcolor{headergray}
\textbf{LR} & \textbf{Std} & \textbf{Full BC} \\
\midrule
$2\mathrm{e}{-5}$ & \best{1.347} & 1.348 \\
$5\mathrm{e}{-5}$ & 1.352 & 1.348 \\
$1\mathrm{e}{-4}$ & 1.371 & \best{1.346} \\
$2\mathrm{e}{-4}$ & 1.415 & 1.395 \\
\bottomrule
\end{tabular}
\end{table}


\begin{table}[t]
\centering
\small
\setlength{\tabcolsep}{4pt}
\renewcommand{\arraystretch}{1.1}
\caption{AdamW inverse-correction implementation checks. Rows include denominator microbatch count, correction timing, and stability diagnostics.}
\label{tab:app-adamw-impl}
\begin{tabular}{@{}llcc@{}}
\toprule
\rowcolor{headergray}
\textbf{Variant} & \textbf{Control Variable} & \textbf{Setting} & \textbf{Eval} \\
\midrule
Inverse only & denominator microbatches & default & \best{1.341} \\
Inverse only & denominator microbatches & $m=8$ & 1.342 \\
Full BC & inverse timing & pre-EMA, b512, lr $1\mathrm{e}{-4}$ & 1.346 \\
\rowcolor{sectiongray}
\multicolumn{4}{@{}l}{\textit{Stability diagnostics}} \\
Full BC & warm start & 50 std steps then BC & 19.456${}^{\dagger}$ \\
Full BC & support clip & $\tau\in\{1,4,10\}$ & ---${}^{\ddagger}$ \\
\multicolumn{4}{@{}p{0.95\linewidth}}{\footnotesize ${}^{\dagger}$Training diverged after the warm-start handoff from standard to bias-corrected updates. ${}^{\ddagger}$Aggressive support clipping did not yield a usable final-evaluation checkpoint.} \\
\bottomrule
\end{tabular}
\end{table}

\subsection{Cross-fit mixing sweeps}

Tables~\ref{tab:app-alpha-b128} and~\ref{tab:app-alpha-b512} report AdamW cross-fit mixing experiments. Here $\alpha$ controls how strongly the denominator is mixed between same-batch and cross-fitted estimates. Keeping batch size fixed makes the pattern clearer: small rolling-$B$ mixing recovers much of the standard behavior at b128, while fixed full decoupling at b512 needs a larger learning rate.

\begin{table}[t]
\centering
\small
\setlength{\tabcolsep}{4pt}
\renewcommand{\arraystretch}{1.1}
\caption{AdamW SFT cross-fit mixing at b128 and lr $2\mathrm{e}{-5}$.}
\label{tab:app-alpha-b128}
\begin{tabular}{@{}llc@{}}
\toprule
\rowcolor{headergray}
\textbf{Variant} & \textbf{Mixing Coefficient} & \textbf{Eval} \\
\midrule
Cross-fit & $\alpha=0$ & 1.349 \\
Cross-fit & $\alpha=0.25$ & 1.349 \\
Cross-fit & adaptive $\alpha$ & 1.349 \\
Rolling cross-fit & $\alpha_{\max}=0.1$ & \best{1.342} \\
Rolling cross-fit & $\alpha_{\max}=0.25$ & 1.343 \\
Rolling cross-fit & $\alpha_{\max}=1.0$ & 1.345 \\
\bottomrule
\end{tabular}
\end{table}

\begin{table}[t]
\centering
\small
\setlength{\tabcolsep}{4pt}
\renewcommand{\arraystretch}{1.1}
\caption{AdamW fixed cross-fit at b512. The related controls are mixing strength and learning rate.}
\label{tab:app-alpha-b512}
\begin{tabular}{@{}lccc@{}}
\toprule
\rowcolor{headergray}
\textbf{Variant} & \textbf{Mixing Coefficient} & \textbf{LR} & \textbf{Eval} \\
\midrule
Fixed cross-fit & $\alpha=0.5$ & $2\mathrm{e}{-5}$ & 1.353 \\
Fixed cross-fit & $\alpha=1.0$ & $2\mathrm{e}{-5}$ & 1.356 \\
Fixed cross-fit & $\alpha=1.0$ & $1\mathrm{e}{-4}$ & \best{1.345} \\
\bottomrule
\end{tabular}
\end{table}

\subsection{Sophia SFT sweeps}

Table~\ref{tab:app-sophia-sweeps} reports the Sophia SFT sweeps. Sophia SFT is dominated by its clipped-ratio update: the default b512 full-BC runs are stable and nearly tied with standard, while the larger 5K evaluation slightly favors full BC. Aggressive cross-fitting at lr $1\mathrm{e}{-4}$ is a high-learning-rate failure case.

\begin{table}[t]
\centering
\small
\setlength{\tabcolsep}{4pt}
\renewcommand{\arraystretch}{1.1}
\caption{Sophia SFT sweeps.}
\label{tab:app-sophia-sweeps}
\begin{tabular}{@{}llllr@{}}
\toprule
\rowcolor{headergray}
\textbf{Batch/LR} & \textbf{Variant} & \textbf{Eval Set} & \textbf{Std} & \textbf{Corrected} \\
\midrule
\rowcolor{sectiongray}
\multicolumn{5}{@{}l}{\textit{Full-BC timing at fixed b512 and lr $2\mathrm{e}{-5}$}} \\
b512 / $2\mathrm{e}{-5}$ & post-EMA full BC & 500 & 1.3892 & \best{1.3892} \\
b512 / $2\mathrm{e}{-5}$ & pre-EMA full BC & 500 & \best{1.3892} & 1.3893 \\
b512 / $2\mathrm{e}{-5}$ & full BC & 5K & 1.3422 & \best{1.3418} \\
\addlinespace[2pt]
\rowcolor{sectiongray}
\multicolumn{5}{@{}l}{\textit{Cross-fit-only learning-rate check}} \\
b512 / $2\mathrm{e}{-5}$ & cross-fit only & 500 & \best{1.3892} & 1.3899 \\
b512 / $1\mathrm{e}{-4}$ & cross-fit only & 500 & \best{1.3892} & 1.6570 \\
\addlinespace[2pt]
\rowcolor{sectiongray}
\multicolumn{5}{@{}l}{\textit{Low-batch stress test}} \\
b128 / $2\mathrm{e}{-5}$ & full-BC recipe & 500 & \best{1.4841} & 1.5699 \\
\bottomrule
\end{tabular}
\end{table}

\subsection{Shampoo SFT sweeps}

Table~\ref{tab:app-shampoo-sweeps} reports Shampoo layer-coverage and inverse-root recomputation sweeps. Attention-only Shampoo is the most stable setting; routing MLP matrices through Shampoo increases coverage but worsens the short SFT loss for both standard and corrected variants in these experiments. These rows are grouped separately from Sophia because they answer a different question: how much structured preconditioner coverage and eigendecomposition frequency are needed in short SFT. Across all rows in Table~\ref{tab:app-shampoo-sweeps}, the Std and Full BC eval losses agree to within $\sim 0.005$, which is at or below the 500-example single-seed noise floor; the table documents that the Shampoo correction does not measurably help in short SFT, and Section~\ref{sec:limitations} cites this as one of the regimes where the variance cost of sample splitting can offset the bias benefit.

\begin{table}[t]
\centering
\small
\setlength{\tabcolsep}{4pt}
\renewcommand{\arraystretch}{1.1}
\caption{Shampoo SFT sweeps over layer coverage and inverse-root recomputation.}
\label{tab:app-shampoo-sweeps}
\begin{tabular}{@{}llllrr@{}}
\toprule
\rowcolor{headergray}
\textbf{Coverage} & \textbf{Max Dim} & \textbf{Root / Momentum Setting} & \textbf{Eval} & \textbf{Std} & \textbf{Full BC} \\
\midrule
\rowcolor{sectiongray}
\multicolumn{6}{@{}l}{\textit{Attention-only Shampoo}} \\
Attention only & 2048 & root freq. 10 & 500 & \best{1.347} & 1.347 \\
Attention only & 2048 & root freq. 10 & 5K & \best{1.298} & 1.298 \\
Attention only, b128 & 2048 & root freq. 10 & 500 & \best{1.342} & 1.343 \\
\addlinespace[2pt]
\rowcolor{sectiongray}
\multicolumn{6}{@{}l}{\textit{Attention + MLP Shampoo}} \\
Attention + MLP & 4864 & root freq. 10 & 500 & \best{1.381} & 1.385 \\
Attention + MLP & 4864 & root freq. 2 & 500 & \best{1.386} & 1.388 \\
Attention + MLP & 4864 & root freq. 2, $\beta=0.5$ & 500 & \best{1.387} & 1.389 \\
Attention + MLP & 5000 & literature-style params & 500 & \best{1.363} & 1.368 \\
\bottomrule
\end{tabular}
\end{table}

\subsection{Additional pretraining comparisons}

Table~\ref{tab:app-pretrain-comparisons} reports the additional pretraining comparisons used to contextualize the main results.

\begin{table}[t]
\centering
\small
\setlength{\tabcolsep}{4pt}
\renewcommand{\arraystretch}{1.1}
\caption{Additional pretraining comparisons.}
\label{tab:app-pretrain-comparisons}
\begin{tabular}{@{}lllrr@{}}
\toprule
\rowcolor{headergray}
\textbf{Optimizer} & \textbf{Variant} & \textbf{LR} & \textbf{Final Train} & \textbf{Eval} \\
\midrule
\rowcolor{sectiongray}
\multicolumn{5}{@{}l}{\textit{AdamW}} \\
AdamW & Standard, clean & $6\mathrm{e}{-4}$ & 4.832 & 4.836 \\
AdamW & LOO+Jensen, clean & emb $6\mathrm{e}{-4}$ / dense $9\mathrm{e}{-4}$ & \best{4.733} & \best{4.687} \\
AdamW & Standard, noisy train & $6\mathrm{e}{-4}$ & \best{4.853} & 4.823 \\
AdamW & LOO+Jensen, noisy train & emb $6\mathrm{e}{-4}$ / dense $9\mathrm{e}{-4}$ & 4.872 & \best{4.803} \\
AdamW & Inverse only, clean & $6\mathrm{e}{-4}$ & 4.826 & 4.830 \\
AdamW & Two-fold full BC, clean & $6\mathrm{e}{-4}$ & 5.281 & 5.285 \\
\addlinespace[2pt]
\rowcolor{sectiongray}
\multicolumn{5}{@{}l}{\textit{Sophia-G}} \\
Sophia & Standard & $6\mathrm{e}{-4}$ & 6.663 & 6.665 \\
Sophia & Full BC & $6\mathrm{e}{-4}$ & \best{6.592} & \best{6.595} \\
\addlinespace[2pt]
\rowcolor{sectiongray}
\multicolumn{5}{@{}l}{\textit{Shampoo}} \\
Shampoo & Standard & $6\mathrm{e}{-4}$ & 5.787 & 5.792 \\
Shampoo & Full BC & $1\mathrm{e}{-3}$ & \best{5.676} & \best{5.681} \\
\bottomrule
\end{tabular}
\end{table}

\clearpage

\section{Code, Assets, and LLM Usage}
\label{sec:reproducibility-assets-llm}

\paragraph{Code availability.}
The code for the experiments is available at \url{https://github.com/fastino-ai/preconditioner-bias-correction}. The repository includes optimizer implementations, training scripts, evaluation scripts, plotting utilities, and the data-preparation script for packed FineWeb-Edu sequences. It is intended to reproduce the runs summarized in Tables~\ref{tab:pretraining-results}--\ref{tab:sft-main} and the additional sweeps in Appendix~\ref{app:additional-experiments}, subject to the same hardware and data-access assumptions listed in Table~\ref{tab:training-config}.

\paragraph{Existing assets and licenses.}
The experiments use the Qwen/Qwen2.5-0.5B model and tokenizer, which are released under Apache 2.0; the HuggingFaceFW/fineweb-edu dataset, which is released under the Open Data Commons Attribution License (ODC-By) v1.0; and the yahma/alpaca-cleaned instruction dataset, which is released under CC-BY-4.0. The optimizer baselines are credited to their original papers in Section~\ref{sec:related-work}. We do not release new model checkpoints or new datasets.

\paragraph{LLM usage.}
Large language models are the optimization targets in this work. LLM-based assistants were used for drafting, editing manuscript text, understanding technical concepts, writing code, and helping facilitate or run experiments. They were not used to generate training data, labels, mathematical results, or experimental measurements; all claims, derivations, code, and reported numbers were checked by the authors against the source code and run artifacts.

\end{document}